\newcommand{\etal}{{\it et al.}}
\newcommand{\SB}{\textsf{sb}}
\newcommand\true{\mathit{true}}
\newcommand\false{\mathit{false}}
\newcommand{\bee}{\textsf{BEE}}
\newcommand{\bz}{{\bf 0}} 
\newcommand{\ba}{{\bf 1}} 
\newcommand{\bb}{{\bf 2}} 
\newcommand{\bc}{{\bf 3}}  
\newcommand{\set}[1]{\left\{
      \begin{array}{l}#1\end{array}
      \right\}}
\newcommand{\sset}[2]{\left\{~#1  \left|
      \begin{array}{l}#2\end{array}
    \right.     \right\}}
\newcommand\tuple[1]{\langle #1 \rangle}
\renewcommand{\AA}{{\cal A}}
\newcommand{\RR}{{\cal R}}
\newcommand{\MM}{{\cal M}}
\newtheorem{observation}{Observation}
\title{Computing the Ramsey Number R(4,3,3) using Abstraction and
  Symmetry breaking\thanks{Supported by the Israel Science
    Foundation, grant 182/13.}}
\author{Michael Codish\inst{1} \and Michael Frank\inst{1} \and 
        Avraham Itzhakov\inst{1}\and Alice Miller\inst{2}}
\institute{
  Department of Computer Science,
  Ben-Gurion University of the Negev, Israel
\and
  School of Computing Science,
  University of Glasgow, Scotland
}
\begin{document}
\maketitle
\begin{abstract}
  The number $R(4,3,3)$ is often presented as the unknown Ramsey
  number with the best chances of being found ``soon''. Yet, its
  precise value has remained unknown for almost 50 years.
  This paper presents a methodology based on
  \emph{abstraction} and \emph{symmetry breaking} that applies to
  solve hard graph edge-coloring problems.
  The utility of this methodology is demonstrated by using it to
  compute the value $R(4,3,3)=30$. Along the way it is required to
  first compute the previously unknown set $\RR(3,3,3;13)$ consisting
  of 78{,}892 Ramsey colorings.

\end{abstract}

\section{Introduction}\label{sec:intro}

This paper introduces a general methodology that applies to solve
graph edge-coloring problems and demonstrates its application in the
search for Ramsey numbers. These are notoriously hard graph coloring
problems that involve assigning colors to the edges of a complete
graph. 
An $(r_1,\ldots,r_k;n)$ Ramsey coloring is a graph coloring in $k$
colors of the complete graph $K_n$ that does not contain a
monochromatic complete sub-graph $K_{r_i}$ in color $i$ for each
$1\leq i\leq k$. The set of all such colorings is denoted
$\RR(r_1,\ldots,r_k;n)$.
The Ramsey number $R(r_1,\ldots,r_k)$ is the least $n>0$ such that no
$(r_1,\ldots,r_k;n)$ coloring exists.
In particular, the number $R(4,3,3)$ is often presented as the unknown
Ramsey number with the best chances of being found ``soon''. Yet, its
precise value has remained unknown for more than 50 years.  It is
currently known that $30\leq R(4,3,3)\leq 31$.
Kalbfleisch~\cite{kalb66} proved in 1966 that $R(4,3,3)\geq 30$,
Piwakowski~\cite{Piwakowski97} proved in 1997 that $R(4,3,3)\leq 32$,
and one year later Piwakowski and Radziszowski~\cite{PR98} proved that
$R(4,3,3)\leq 31$. We demonstrate how our methodology applies to
computationally prove that $R(4,3,3)=30$.

Our strategy to compute $R(4,3,3)$ is based on the search for a
$(4,3,3;30)$ Ramsey coloring.  If one exists, then because
$R(4,3,3)\leq 31$, it follows that $R(4,3,3) = 31$. Otherwise, 
because $R(4,3,3)\geq 30$, it follows that $R(4,3,3) = 30$.

In recent years, Boolean SAT solving techniques have improved
dramatically. Today's SAT solvers are considerably faster and able to
manage larger instances than were previously possible. Moreover, encoding and
modeling techniques are better understood and increasingly
innovative. SAT is currently applied to solve a wide variety of hard
and practical combinatorial problems, often outperforming
dedicated algorithms.
The general idea is to encode a (typically, NP) hard problem instance,
$\mu$, to a Boolean formula, $\varphi_\mu$, such that the satisfying
assignments of $\varphi_\mu$ correspond to the solutions of
$\mu$. Given such an encoding, a SAT solver can be applied to solve
$\mu$.

Our methodology in this paper combines SAT solving with two additional
concepts: \emph{abstraction} and \emph{symmetry breaking}.
The paper is structured to let the application drive the presentation
of the methodology in three steps. 
Section~\ref{sec:prelim} presents: preliminaries on graph coloring
problems, some general notation on graphs, and a simple constraint
model for Ramsey coloring problems.
%
Section~\ref{sec:embed} presents the first step in our quest to
compute $R(4,3,3)$. We introduce a basic SAT encoding and detail how a
SAT solver is applied to search for Ramsey colorings. Then we describe
and apply a well known embedding technique, which allows to determine
a set of partial solutions in the search for a $(4,3,3;30)$ Ramsey
coloring such that if a coloring exists then it is an extension of one
of these partial solutions.  This may be viewed as a preprocessing step for
a SAT solver which then starts from a partial solution.  Applying this
technique we conclude that if a $(4,3,3;30)$ Ramsey coloring exists
then it must be $\tuple{13,8,8}$ regular. Namely, each vertex in the
coloring must have 13 edges in the first color, and 8 edges in each of
the other two colors.
This result is already considered significant progress in the research
on Ramsey numbers as stated in~\cite{XuRad2015}.
To further apply this technique to determine if there exists a
$\tuple{13,8,8}$ regular $(4,3,3;30)$ Ramsey coloring requires to first
compute the currently unknown set $\RR(3,3,3;13)$.


Sections~\ref{sec:symBreak}---\ref{sec:33313b} present the second
step: computing $\RR(3,3,3;13)$. Section~\ref{sec:symBreak}
illustrates how a straightforward approach, combining SAT solving with
\emph{symmetry breaking}, works  for smaller instances but not for
$\RR(3,3,3;13)$.
Then Section~\ref{sec:abs} introduces an \emph{abstraction}, called
degree matrices, Section~\ref{sec:33313} demonstrates how to compute
degree matrices for $\RR(3,3,3;13)$, and Section~\ref{sec:33313b}
shows how to use the degree matrices to compute $\RR(3,3,3;13)$.
%
%
Section~\ref{sec:433_30} presents the third step re-examining the
embedding technique described in Section~\ref{sec:embed} which given the
set $\RR(3,3,3;13)$ applies to prove that there does not exist any
$(4,3,3;30)$ Ramsey coloring which is also $\tuple{13,8,8}$ regular.
%
%
Section~\ref{sec:conclude} presents a conclusion.


\section{Preliminaries and Notation}\label{sec:prelim}

In this paper, graphs are always simple, i.e.  undirected and with no
self loops. 
For a natural number $n$ let $[n]$ denote the set $\{1,2,\ldots,n\}$.
A graph coloring, in $k$ colors, is a pair $(G,\kappa)$ consisting of
a simple graph $G=(V,E)$ and a mapping $\kappa\colon E\to[k]$. When
$G$ is clear from the context we refer to $\kappa$ as the graph
coloring.
We typically represent $G=([n],E)$ as a (symmetric) $n\times n$ adjacency matrix,
$A$, defined such that
\[A_{i,j}=  
    \begin{cases}
       \kappa((i,j)) & \mbox{if } (i,j) \in E\\
          0          & \mbox{otherwise}
    \end{cases}
\]

Given a graph coloring $(G,\kappa)$ in $k$ colors with $G=(V,E)$,
the set of neighbors of a vertex $u\in V$ in color $c\in [k]$ is
$N_c(u) = \sset{v}{(u,v)\in E, \kappa((u,v))=c} $ and the color-$c$
degree of $u$ is $deg_{c}(u) = |N_c(u)|$. The color degree tuple of
$u$ is the $k$-tuple $deg(u)=\tuple{deg_{1}(u),\ldots,deg_{k}(u)}$.
The sub-graph of $G$ on the $c$ colored neighbors of $x\in V$ is the
projection of $G$ to vertices in $N_c(x)$ defined by $G^c_x =
(N_c(x),\sset{(u,v)\in E}{u,v\in N_c(x)})$.


For example, take as $G$ the graph coloring depicted by the adjacency
matrix in Figure~\ref{embed_12_8_8} with $u$ the vertex corresponding
to the first row in the matrix. Then,
$N_1(u) = \{2,3,4,5,6,7,8,9,10,11,12,13\}$, $N_2(u) =
\{14,15,16,17,18,19,20,21\}$, and
$N_3(u)=\{22,23,24,25,26,27,28,29\}$.
The subgraphs $G^1_u$, $G^2_u$, and $G^3_u$ are highlighted by the
boldface text in Figure~\ref{embed_12_8_8}.

An $(r_1,\ldots,r_k;n)$ Ramsey coloring is a graph coloring in $k$
colors of the complete graph $K_n$ that does not contain a
monochromatic complete sub-graph $K_{r_i}$ in color $i$ for each
$1\leq i\leq k$.
The set of all such colorings is denoted $\RR(r_1,\ldots,r_k;n)$.  The
Ramsey number $R(r_1,\ldots,r_k)$ is the least $n>0$ such that no
$(r_1,\ldots,r_k;n)$ coloring exists.
In the multicolor case ($k>2$), the only known value of a nontrivial
Ramsey number is $R(3,3,3)=17$.  
Prior to this paper, it was known that $30\leq R(4,3,3)\leq 31$.
Moreover, while the sets of $(3,3,3;n)$ colorings were known for
$14\leq n\leq 16$, the set of colorings for $n=13$ was never
published.\footnote{Recently, the set $\RR(3,3,3;13)$ has also been
  computed independently by: Stanislaw Radziszowski, Richard Kramer and
  Ivan Livinsky \cite{stas:personalcommunication}.}
More information on recent results concerning Ramsey numbers can be
found in the electronic dynamic survey by Radziszowski~\cite{Rad}.


\begin{figure}\small
  \centering
    \begin{eqnarray}
      \varphi_{adj}^{n,k}(A) &=& \hspace{-2mm}\bigwedge_{1\leq q<r\leq n}
                          \left(\begin{array}{l}
                             1\leq A_{q,r}\leq k  ~~\land~~
                             A_{q,r} = A_{r,q} ~~\land ~~
                             A_{q,q} = 0
                           \end{array}\right)
     \label{constraint:simple}
\\
     \varphi_{r}^{n,c}(A) &=& 
                \bigwedge_{I\in \wp_r([n])}
                \bigvee \sset{A_{i,j}\neq c}{i,j \in I, i<j}
     \label{constraint:nok}
\end{eqnarray}
\vspace{-3ex}
\begin{eqnarray}\small
\label{constraint:coloring}
    \varphi_{(r_1,\ldots,r_k;n)}(A) & = & \varphi_{adj}^{n,k}(A) \land  \hspace{-2mm}
                      \bigwedge_{1\leq c\leq k} \hspace{-1mm}
                      \varphi_{r_c}^{n,c}(A)
\end{eqnarray}

\caption{Constraints for graph labeling problems: Ramsey colorings
  $(r_1,\ldots,r_k;n)$}
  \label{fig:gcp}
\end{figure}


A graph coloring problem on $k$ colors is about the search for a graph
coloring which satisfies a given set of constraints. Formally, it is
specified as a
formula, $\varphi(A)$, where $A$ is an $n\times n$ adjacency matrix of
integer variables with domain $\{0\}\cup [k]$ and $\varphi$ is a
constraint on these variables.
A solution is an assignment of integer values to the variables in $A$
which satisfies $\varphi$ and determines both the graph edges and their
colors. We often refer to a solution as an integer adjacency matrix
and denote the set of solutions as $sol(\varphi(A))$.

Figure~\ref{fig:gcp} presents the $k$-color graph coloring problems we
focus on in this paper: $(r_1,\ldots,r_k;n)$ Ramsey colorings. 
Constraint~(\ref{constraint:simple}), $\varphi_{adj}^{n,k}(A)$, states
that the graph represented by matrix $A$ has $n$ vertices, is $k$
colored, and is simple.
Constraint~(\ref{constraint:nok}) $\varphi_{r}^{n,c}(A)$ states that
the $n\times n$ matrix $A$ has no embedded sub-graph $K_r$ in color
$c$. Each conjunct, one for each set $I$ of $r$ vertices, is a
disjunction stating that one of the edges between vertices of $I$ is
not colored $c$.  Notation: $\wp_r(S)$ denotes the set of all subsets
of size $r$ of the set $S$.
Constraint~(\ref{constraint:coloring}) states that $A$ is a
$(r_1,\ldots,r_k;n)$ Ramsey coloring.


For graph coloring problems, solutions are typically closed under
permutations of vertices and of colors. Restricting the search space
for a solution modulo such permutations is crucial when trying to
solve hard graph coloring problems. It is standard practice to
formalize this in terms of graph (coloring) isomorphism.

Let $G=(V,E)$ be a graph (coloring) with $V=[n]$ and let $\pi$ be a
permutation on $[n]$. Then $\pi(G) = (V,\sset{ (\pi(x),\pi(y))}{ (x,y)
  \in E})$. Permutations act on adjacency matrices in the natural way:
If $A$ is the adjacency matrix of a graph $G$, then $\pi(A)$ is the
adjacency matrix of $\pi(G)$ and $\pi(A)$ is obtained by
simultaneously permuting with $\pi$ both rows and columns of $A$.
 

\begin{definition}[\textbf{(weak) isomorphism of graph colorings}]
 \label{def:weak_iso}
 Let $(G,{\kappa_1})$ and $(H,{\kappa_2})$ be $k$-color graph
 colorings with $G=([n],E_1)$ and $H=([n],E_2)$. 
 We say that $(G,{\kappa_1})$ and $(H,{\kappa_2})$ are weakly
 isomorphic, denoted $(G,{\kappa_1})\approx(H,{\kappa_2})$ if there
 exist permutations $\pi \colon [n] \to [n]$ and $\sigma \colon [k]
 \to [k]$ such that $(u,v) \in E_1 \iff (\pi(u),\pi(v)) \in E_2$ and
 $\kappa_1((u,v)) = \sigma(\kappa_2((\pi(u), \pi(v))))$.
 We denote such a weak isomorphism:
 $(G,{\kappa_1})\approx_{\pi,\sigma}(H,{\kappa_2})$.
 When $\sigma$ is the identity permutation, we say that
 $(G,{\kappa_1})$ and $(H,{\kappa_2})$ are isomorphic.
\end{definition}
 
The following lemma emphasizes the importance of weak graph
isomorphism as it relates to Ramsey numbers. Many classic
coloring problems exhibit the same property.

\begin{lemma}[\textbf{$\RR(r_1,r_2,\ldots,r_k;n)$ is closed under
    $\approx$}]
\label{lemma:closed}
  \quad Let $(G,{\kappa_1})$ and $(H,{\kappa_2})$ be graph colorings
  in $k$ colors such that $(G,\kappa_1) \approx_{\pi,\sigma}
  (H,\kappa_2)$. Then,
  \[
  (G,\kappa_1) \in \RR(r_1,r_2,\ldots,r_k;n) \iff (H,\kappa_2) \in
  \RR(\sigma(r_1),\sigma(r_2),\ldots,\sigma(r_k);n)
  \]
\end{lemma}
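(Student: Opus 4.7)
The plan is to unfold the definition of $\RR$ and translate monochromatic cliques across the weak isomorphism. By definition, $(G,\kappa_1)\notin\RR(r_1,\ldots,r_k;n)$ iff there exist a color $c\in[k]$ and a vertex set $S\subseteq[n]$ with $|S|=r_c$ such that $\kappa_1((u,v))=c$ for all distinct $u,v\in S$. It therefore suffices to show that any monochromatic $K_{r_c}$ of color $c$ in $(G,\kappa_1)$ gives rise to a monochromatic clique of the same size in $(H,\kappa_2)$ whose color is the one corresponding to $c$ under $\sigma$, and conversely.

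For the forward direction I would take $S'=\pi(S)$. Since $\pi$ is a bijection on $[n]$, $|S'|=|S|=r_c$. For distinct $u',v'\in S'$, writing $u'=\pi(u)$ and $v'=\pi(v)$, the defining identity $\kappa_1((u,v))=\sigma(\kappa_2((\pi(u),\pi(v))))$ together with $\kappa_1((u,v))=c$ gives $\kappa_2((u',v'))=\sigma^{-1}(c)$. Hence $S'$ is a monochromatic $K_{r_c}$ of color $\sigma^{-1}(c)$ in $(H,\kappa_2)$. Reading the tuple $(\sigma(r_1),\ldots,\sigma(r_k))$ as the reindexing whose $j$-th entry is $r_{\sigma(j)}$, the entry at position $\sigma^{-1}(c)$ equals $r_c$, so $S'$ witnesses $(H,\kappa_2)\notin\RR(\sigma(r_1),\ldots,\sigma(r_k);n)$.

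For the converse I would invoke the symmetry of $\approx$: inverting the defining equations shows that $(G,\kappa_1)\approx_{\pi,\sigma}(H,\kappa_2)$ implies $(H,\kappa_2)\approx_{\pi^{-1},\sigma^{-1}}(G,\kappa_1)$, and re-running the forward argument for this inverse witness yields the other implication.

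The main difficulty is purely notational: carefully lining up the reindexing so that the forbidden clique size assigned to color $\sigma^{-1}(c)$ in the permuted Ramsey set equals $r_c$. The underlying graph-theoretic content is routine, since $\pi$ is an isomorphism of the underlying unlabelled graphs and therefore preserves cliques, while $\sigma$ merely relabels colors without changing clique sizes.
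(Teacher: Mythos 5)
Your proposal is correct and follows essentially the same route as the paper's proof: both transfer a monochromatic clique across the weak isomorphism via $\pi$ (the paper pulls a clique back from $H$ to $G$ to get a contradiction, you push one forward and then invoke the symmetry of $\approx$ for the converse). If anything, your version is more careful than the paper's rather terse argument, since you explicitly track that the clique's color becomes $\sigma^{-1}(c)$ and that the reindexed tuple assigns size $r_c$ to that color.
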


We make use of the following theorem from~\cite{PR98}.


\begin{theorem}\label{thm:433}
  $30\leq R(4,3,3)\leq 31$ and, $R(4,3,3)=31$ if and only if there
  exists a $(4,3,3;30)$ coloring $\kappa$ of $K_{30}$ such that:
  (1) For every vertex $v$ and $i\in\{2,3\}$, $5\leq deg_{i}(v)\leq
  8$, and $13\leq deg_{1}(v)\leq 16$.
  (2) Every edge in the third color has at least one endpoint $v$ with
  $deg_{3}(v)=13$. 
  (3) There are at least 25 vertices $v$ for which 
  $deg_{1}(v)=13$, $deg_{2}(v)=deg_{3}(v)=8$.
\end{theorem}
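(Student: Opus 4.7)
The plan is to split the theorem into the numeric bounds on $R(4,3,3)$ and the structural characterization of a putative $(4,3,3;30)$ witness. The bounds $30\leq R(4,3,3)\leq 31$ are from the literature: the lower bound via Kalbfleisch's explicit $(4,3,3;29)$ coloring~\cite{kalb66}, and the upper bound by ruling out any $(4,3,3;31)$ coloring~\cite{PR98}. The reverse direction of the biconditional is then trivial: if a $(4,3,3;30)$ coloring satisfying (1)--(3) exists, then $R(4,3,3)\geq 31$, and combined with the upper bound we obtain equality. The content of the theorem is the forward direction, which amounts to showing that any $(4,3,3;30)$ coloring must satisfy the three structural conditions.

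For condition (1), I would apply the standard neighborhood reduction. Fix a vertex $v$ in a $(4,3,3;30)$ coloring. The induced coloring on $N_1(v)$ must be a $(3,3,3)$-coloring, since a monochromatic $K_3$ in color $1$ together with $v$ would form a $K_4$ in color $1$, while monochromatic triangles in colors $2$ or $3$ are already forbidden; hence $deg_1(v)\leq R(3,3,3)-1=16$. For $i\in\{2,3\}$, the induced coloring on $N_i(v)$ cannot contain any color-$i$ edge (else a $K_3$ in color $i$ together with $v$), so it uses only the other two colors and must avoid a $K_4$ in color $1$ and a $K_3$ in the remaining color, giving $deg_i(v)\leq R(4,3)-1=8$. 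The handshake identity $deg_1(v)+deg_2(v)+deg_3(v)=29$ then forces the matching lower bounds $deg_1(v)\geq 13$ and $deg_i(v)\geq 5$.

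Conditions (2) and (3) require a finer double-counting argument. Let $n_{(a,b,c)}$ denote the number of vertices with color-degree tuple $(a,b,c)$; by (1) only a handful of tuples are admissible. Summing $deg_c(v)$ over $v$ for each color yields three edge-count identities tying the $n_{(a,b,c)}$ together. For each vertex $v$ the induced $(4,3)$-colorings on $N_2(v)$ and $N_3(v)$ are of order at most $8$, just below the extremal $R(4,3)=9$, so the underlying $2$-color sub-colorings are few and well understood; this rigidity yields joint constraints on the degree tuples of $v$ and its neighbors. Condition (2) would follow by assuming a color-$3$ edge $(u,v)$ whose endpoints both violate the stated degree condition and then using the extremality of the $(4,3;8)$ neighborhoods to force a forbidden monochromatic clique. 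Condition (3) would then follow by solving the resulting linear system on the $n_{(a,b,c)}$: the extreme tuple $(13,8,8)$ must account for at least $25$ of the $30$ vertices.

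The main obstacle will be the bookkeeping in (2) and (3): one must enumerate the admissible degree tuples, track how edges in each color distribute between tuple-classes, and exploit the rigidity of the $(4,3;8)$ and $(3,3;5)$ Ramsey sub-colorings arising in neighborhoods. Once the structural lemmas on these sub-colorings are in hand, the remainder reduces to a small system of linear inequalities on the $n_{(a,b,c)}$, which can be closed by hand or, failing that, by a computer-assisted enumeration in the spirit of the rest of the present paper.
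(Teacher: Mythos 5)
You should first note that the paper does not prove this theorem at all: it is imported verbatim from Piwakowski and Radziszowski~\cite{PR98} (``We make use of the following theorem from\ldots''), so there is no internal proof to compare against. Your derivation of the bounds and of condition (1) is correct and is exactly the standard neighborhood argument the paper itself records as Observation~\ref{obs:embed} and Corollary~\ref{cor:degrees}: $G^1_v$ is a $(3,3,3;d_1)$ coloring and $G^i_v$ for $i\in\{2,3\}$ is a $(4,3;d_i)$ coloring, so $d_1\leq R(3,3,3)-1=16$ and $d_2,d_3\leq R(4,3)-1=8$, and $d_1+d_2+d_3=29$ gives the matching lower bounds. (Incidentally, condition (2) as printed must be a typo: $deg_3(v)=13$ is incompatible with $deg_3(v)\leq 8$ from condition (1); the intended condition in~\cite{PR98} refers to degree $13$ in the $K_4$-free color, i.e.\ $deg_1(v)=13$. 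Your plan of ``assuming a color-3 edge whose endpoints both violate the stated degree condition'' cannot work against the literal statement, since under (1) every vertex violates it.)

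The genuine gap is in conditions (2) and (3), which are the actual content of the theorem and which you only sketch. The edge-count identities $\sum_v deg_c(v)=2e_c$ together with $e_1+e_2+e_3=\binom{30}{2}$ do not come close to forcing at least $25$ vertices of type $\tuple{13,8,8}$: for example, the distribution in which all $30$ vertices have tuple $\tuple{14,8,7}$ satisfies every sum and parity constraint, so the ``small system of linear inequalities on the $n_{(a,b,c)}$'' cannot be closed without further input. The missing input in~\cite{PR98} is a genuinely combinatorial analysis: one must exploit the explicit classification of the $(4,3;8)$ colorings (only $3$ up to isomorphism, with rigid degree sequences) and of the smaller $(4,3;n)$ and $(3,3,3;n)$ colorings to constrain which degree tuples can sit at the two ends of a color-2 or color-3 edge, deduce the edge-endpoint condition (2) from that rigidity, and only then run the counting argument that yields (3). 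As written, your proposal names the right ingredients but does not establish these key lemmas, so (2) and (3) remain unproven; the honest options are to carry out the PR98 analysis in full or to cite it, as the paper does.
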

\begin{corollary}\label{cor:degrees}
  Let $G=(V,E)$ be a $(4,3,3;30)$ coloring, $v\in V$ a selected
  vertex, and assume without loss of generality that $deg_2(v)\geq
  deg_3(v)$. Then, $deg(v)\in\set{\tuple{13, 8, 8},\tuple{14, 8,
      7},\tuple{15, 7, 7},\tuple{15, 8, 6},\tuple{16, 7, 6},\tuple{16,
      8, 5}}$.
\end{corollary}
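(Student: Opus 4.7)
The plan is to derive the corollary directly from Theorem~\ref{thm:433}, using only part~(1) of the theorem together with the fact that in any coloring of $K_{30}$, each vertex $v$ has total color degree $\sum_{i=1}^{3} deg_i(v) = 29$.

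First I would record the bounds given by Theorem~\ref{thm:433}(1): namely $13\leq deg_1(v)\leq 16$ and $5\leq deg_i(v)\leq 8$ for $i\in\{2,3\}$. Combined with the assumption $deg_2(v)\geq deg_3(v)$, this restricts the search to tuples $\tuple{a,b,c}$ with $13\leq a\leq 16$, $5\leq c\leq b\leq 8$, and $a+b+c=29$.

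Then I would enumerate by the value of $a=deg_1(v)$. For $a=13$, one needs $b+c=16$ with $b,c\leq 8$, forcing $\tuple{13,8,8}$. For $a=14$, $b+c=15$ with $b,c\leq 8$ and $b\geq c$ forces $\tuple{14,8,7}$. For $a=15$, $b+c=14$ with $b,c\leq 8$ and $b\geq c$ yields $\tuple{15,7,7}$ and $\tuple{15,8,6}$. For $a=16$, $b+c=13$ with $c\geq 5$ and $b\geq c$ yields $\tuple{16,7,6}$ and $\tuple{16,8,5}$. Collecting the six tuples gives exactly the set stated in the corollary.

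There is no real obstacle here: parts~(2) and~(3) of Theorem~\ref{thm:433} are not needed, since the degree constraints in part~(1) together with the sum-to-$29$ identity and the symmetry-breaking assumption $deg_2(v)\geq deg_3(v)$ already cut the possibilities down to exactly these six tuples. The only ``care'' required is to not double-count tuples that are permutations of one another in colors $2$ and $3$, which is handled precisely by the WLOG assumption.
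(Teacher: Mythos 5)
Your proposal is correct and is exactly the (implicit) argument the paper relies on: the corollary is stated without a separate proof because it is the immediate arithmetic consequence of Theorem~\ref{thm:433}(1) together with $deg_1(v)+deg_2(v)+deg_3(v)=29$ and the WLOG ordering of colors 2 and 3. Your case enumeration over $deg_1(v)\in\{13,\ldots,16\}$ is complete and yields precisely the six stated tuples.
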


Consider a vertex $v$ in a $(4,3,3;n)$ coloring and focus on the three
subgraphs induced by the neighbors of $v$ in each of the three
colors. The following states that these must be corresponding Ramsey
colorings.

\begin{observation}\label{obs:embed}
  Let $G$ be a $(4,3,3;n)$ coloring and $v$ be any vertex with
  $deg(v)=\tuple{d_1,d_2,d_3}$. Then,
  $d_1+d_2+d_3=n-1$ and $G^1_v$, $G^2_v$, and $G^3_v$ are respectively
  $(3,3,3;d_1)$, $(4,2,3;d_2)$, and $(4,3,2;d_3)$ colorings.
\end{observation}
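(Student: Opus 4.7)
The plan is to prove the two parts of the statement separately: the degree-sum identity, and the Ramsey properties of each of the three induced subgraphs.

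For the first claim, I would simply observe that $G$ is a coloring of $K_n$, so $v$ is joined by an edge to each of the remaining $n-1$ vertices, and each such edge receives exactly one color in $\{1,2,3\}$. Partitioning the neighbors of $v$ by the color of their incident edge gives $|N_1(v)|+|N_2(v)|+|N_3(v)| = n-1$, i.e.\ $d_1+d_2+d_3 = n-1$.

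For the second claim I would treat the three induced subgraphs $G^c_v$ in parallel, in each case checking that no monochromatic forbidden clique can appear. The recurring lifting principle is this: if $H \subseteq N_c(v)$ induces a monochromatic $K_s$ in color $c$ inside $G^c_v$, then $H \cup \{v\}$ induces a monochromatic $K_{s+1}$ in color $c$ inside $G$, because every edge from $v$ into $N_c(v)$ has color $c$ by definition. Conversely, a monochromatic clique inside $G^c_v$ in a color $c' \neq c$ is already a monochromatic clique of the same size in $G$. So for each $G^c_v$ I would verify:
\begin{itemize}
\item $G^1_v$: a $K_3$ in color $1$ would lift to a $K_4$ in color $1$ in $G$; a $K_3$ in color $2$ or $3$ would already sit in $G$. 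Each case contradicts $G$ being a $(4,3,3;n)$ coloring, so $G^1_v$ is $(3,3,3;d_1)$.
\item $G^2_v$: a $K_4$ in color $1$ already sits in $G$; a $K_2$ in color $2$ lifts to a $K_3$ in color $2$ in $G$; a $K_3$ in color $3$ already sits in $G$. Hence $G^2_v$ is $(4,2,3;d_2)$.
\item $G^3_v$: symmetrically, a $K_4$ in color $1$ or a $K_3$ in color $2$ already sits in $G$, while a $K_2$ in color $3$ lifts to a $K_3$ in color $3$. Hence $G^3_v$ is $(4,3,2;d_3)$.
\end{itemize}

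There is no real obstacle here: the statement is a direct structural consequence of the definition of a Ramsey coloring and of the induced subgraph on a color class of neighbors. The only point that requires a moment of care is getting the three target tuples right, which falls out of the simple rule ``decrease the $c$-th coordinate by one in $G^c_v$ and leave the others unchanged,'' corresponding to the fact that only monochromatic cliques in color $c$ benefit from the lifting by $v$.
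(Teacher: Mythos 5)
Your proof is correct and is precisely the standard counting-plus-lifting argument that the paper leaves implicit (the statement is given as an Observation with no proof supplied). The degree-sum identity and the rule ``decrease the $c$-th coordinate by one in $G^c_v$'' via lifting a color-$c$ clique through $v$ are exactly what the authors rely on, so there is nothing to add.
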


Note that by definition a $(4,2,3;n)$ coloring is a $(4,3;n)$ Ramsey
coloring in colors 1 and 3 and likewise a $(4,3,2;n)$ Ramsey coloring
is a $(4,3;n)$ coloring in colors 1 and 2. This is because the ``2''
specifies that the coloring does not contain a subgraph $K_2$ in the
corresponding color and this means that it contains no edge with that
color.
For $n\in\{14,15,16\}$, the sets $\RR(3,3,3;n)$ are known and consist
respectively of 115, 2, and 2 colorings.
Similarly, for $n\in\{5,6,7,8\}$ the sets $\RR(4,3;n)$ are known and
consist respectively of 9, 15, 9, and 3 colorings.

In this paper computations are performed using the
CryptoMiniSAT~\cite{Crypto} SAT solver. SAT encodings (CNF) are
obtained using the finite-domain constraint compiler
\bee~\cite{jair2013}. The use of \bee\ facilitates applications to
find a single (first) solution, or to find all solutions for a
constraint, modulo a specified set of variables.
When solving for all solutions, our implementation iterates with the
SAT solver, adding so called \emph{blocking clauses} each time another
solution is found. This technique, originally due to
McMillan~\cite{McMillan2002}, is simplistic but suffices for our
purposes.
All computations were performed on a cluster with a total of $228$
Intel E8400 cores clocked at 2 GHz each, able to run a total of $456$
parallel threads. Each of the cores in the cluster has computational
power comparable to a core on a standard desktop computer.  Each SAT
instance is run on a single thread.

\section{Basic SAT Encoding and Embeddings}
\label{sec:embed}


Throughout the paper we apply a SAT solver to solve CNF encodings of
constraints such as those presented in Figure~\ref{fig:gcp}. In this
way it is straightforward to find a Ramsey coloring or prove its
non-existence. Ours is a standard encoding to CNF. To this end:
nothing new.
For an $n$ vertex graph coloring problem in $k$ colors we take an
$n\times n$ matrix $A$ where $A_{i,j}$ represents in $k$ bits the
edge $(i,j)$ in the graph: exactly one bit is true indicating which
color the edge takes, or no bit is true indicating that the edge
$(i,j)$ is not in the graph.  
Already at the representation level, we use the same Boolean variables
to represent the color in $A_{i,j}$ and in $A_{j,i}$ for each $1\leq
i<j\leq n$. We further fix the variables corresponding to $A_{i,i}$ to
$\false$.  The rest of the SAT encoding is straightforward.

Constraint~(\ref{constraint:simple}) is encoded to CNF by introducing
clauses to state that for each $A_{i,j}$ with $1\leq i<j\leq n$ at
most one of the $k$ bits representing the color of the edge $(i,j)$ is
true. In our setting typically $k=3$. For three colors, if
$b_1,b_2,b_3$ are the  bits representing the color of an edge,
then three clauses suffice: $(\bar b_1\lor \bar b_2),(\bar b_1\lor
\bar b_3),(\bar b_2\lor \bar b_3)$.
Constraint~(\ref{constraint:nok}) is encoded by a single clause 
per set $I$ of $r$ vertices expressing that at least one
of the bits corresponding to an edge between vertices in $I$ does not
have color $c$. 
Finally Constraint~(\ref{constraint:coloring}) is a conjunction of
constraints of the previous two forms. 


In Section~\ref{sec:symBreak} we will improve on this basic encoding by
introducing symmetry breaking constraints (encoded to CNF). However,
for now we note that, even with symmetry breaking constraints, using
the basic encoding, a SAT solver is currently not able to solve any of
the open Ramsey coloring problems such as those considered in this
paper. In particular, directly applying a SAT solver to search for a
$(4,3,3;30)$ Ramsey coloring is hopeless.

To facilitate the search for $(4,3,3;30)$ Ramsey coloring using a SAT
encoding, we apply a general approach where, when seeking a
$(r_1,\ldots,r_k;n)$ Ramsey coloring one selects a ``preferred''
vertex, call it $v_1$, and based on its degrees in each of the $k$
colors, embeds $k$ subgraphs which are corresponding smaller
colorings. Using this approach, we apply Corollary~\ref{cor:degrees}
and Observation~\ref{obs:embed} to establish that a $(4,3,3;30)$
coloring, if one exists, must be $\tuple{13,8,8}$
regular. Specifically, all vertices must have 13 edges in the first
color and 8 each, in the second and third colors. This result is
considered significant progress in the research on Ramsey
numbers~\cite{XuRad2015}. This ``embedding'' approach is often applied
in the Ramsey number literature where the process of completing (or
trying to complete) a partial solution (an embedding) to a Ramsey
coloring is called \emph{gluing}.
See for example the presentations in~\cite{PiwRad2001,FKRad2004,PR98}.

\begin{theorem}\label{thm:regular}
  Any $(4,3,3;30)$ coloring, if one exists, is $\tuple{13,8,8}$ regular.
\end{theorem}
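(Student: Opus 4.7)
The plan is to show that, for every vertex $v$ in a hypothetical $(4,3,3;30)$ coloring, the degree tuple $\deg(v)$ is forced to be $\tuple{13,8,8}$, by ruling out each of the other possibilities listed in Corollary~\ref{cor:degrees}. Since colors $2$ and $3$ play symmetric roles in a $(4,3,3)$ coloring, and Corollary~\ref{cor:degrees} already performs the reduction ``$\deg_2(v)\geq \deg_3(v)$'', it suffices to refute the five degree tuples $\tuple{14,8,7}$, $\tuple{15,7,7}$, $\tuple{15,8,6}$, $\tuple{16,7,6}$, $\tuple{16,8,5}$.

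First, I would fix a candidate degree tuple $\tuple{d_1,d_2,d_3}$ from that list and apply Observation~\ref{obs:embed}: the three neighborhoods of a chosen vertex $v$ must induce a $(3,3,3;d_1)$ coloring in color $1$, and $(4,3;d_2)$, $(4,3;d_3)$ colorings in colors $2$ and $3$ respectively. The excerpt has already noted the relevant enumerations: $|\RR(3,3,3;n)|=115,2,2$ for $n=14,15,16$, and $|\RR(4,3;n)|=9,15,9,3$ for $n=5,6,7,8$. Thus, for each candidate tuple, only finitely many (and in fact few) triples $(G^1_v,G^2_v,G^3_v)$ need be considered up to isomorphism.

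Next, for each such triple I would construct the partial $(4,3,3;30)$ coloring on $\{v\}\cup N_1(v)\cup N_2(v)\cup N_3(v)$ obtained by planting $G^1_v$, $G^2_v$, $G^3_v$ as the three neighborhoods of $v$ (with the $d_i$ edges from $v$ colored appropriately) and asking a SAT solver whether the remaining ``cross'' edges, between different neighborhoods, admit a coloring that extends the embedding to a full element of $\RR(4,3,3;30)$. Concretely, this is the SAT encoding of $\varphi_{(4,3,3;30)}(A)$ from Figure~\ref{fig:gcp}, with the entries of $A$ corresponding to $v$ and to the three planted subgraphs fixed to their known values. This is exactly the gluing/embedding strategy described earlier in Section~\ref{sec:embed}.

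If every such SAT instance is UNSAT for every candidate triple and every candidate non-$\tuple{13,8,8}$ degree tuple, then no vertex of any $(4,3,3;30)$ coloring can have a degree tuple other than $\tuple{13,8,8}$, proving the theorem. The main obstacle, and the only substantive work, is computational: the cases with $d_1\in\{15,16\}$ are light because $|\RR(3,3,3;15)|=|\RR(3,3,3;16)|=2$, but the case $d_1=14$ yields $115\cdot|\RR(4,3;d_2)|\cdot|\RR(4,3;d_3)|$ gluing instances, each of nontrivial size, so the burden is on showing that the SAT encoding of the gluing step is within reach of the solver. Throughout, isomorphism of the planted subgraphs, combined with the symmetry between colors $2$ and $3$, should be exploited to reduce the number of SAT calls, but no new combinatorial argument beyond Corollary~\ref{cor:degrees} and Observation~\ref{obs:embed} is needed.
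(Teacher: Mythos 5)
Your proposal is correct and follows essentially the same route as the paper: the paper's proof of Theorem~\ref{thm:regular} is precisely the computation you describe, namely ruling out each non-$\tuple{13,8,8}$ degree tuple from Corollary~\ref{cor:degrees} by planting every choice of $(3,3,3;d_1)$, $(4,2,3;d_2)$, $(4,3,2;d_3)$ subgraphs via Observation~\ref{obs:embed} and showing each resulting gluing instance of $\varphi_{(4,3,3;30)}(A)$ UNSAT (Table~\ref{table:regular}). Your instance counts, including the dominant $115\cdot 3\cdot 9$ case for $d_1=14$, match the paper's.
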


\begin{proof}
  By computation as described in the rest of this section.
\end{proof}

\newcommand{\undA}{\_&\_&\_&\_&\_&\_&\_&\_&\_&\_&\_&\_&\_&\_&\_}
\newcommand{\undB}{\_&\_&\_&\_&\_&\_&\_&\_&\_&\_&\_&\_&\_&\_}
\newcommand{\undC}{\_&\_&\_&\_&\_&\_&\_}
\newcommand{\undD}{\_&\_&\_&\_&\_&\_&\_&\_&\_&\_&\_&\_&\_&\_&\_&\_&\_&\_&\_&\_&\_&\_}
\begin{wrapfigure}[14]{r}{.460\linewidth}\vspace{-11mm}
  \begin{center}                   
\resizebox{.95\linewidth}{!}{$
\left[                                  
\begin{smallmatrix}
\bz&\ba&\ba&\ba&\ba&\ba&\ba&\ba&\ba&\ba&\ba&\ba&\ba&\ba&\ba&
    \bb&\bb&\bb&\bb&\bb&\bb&\bb&\bb&\bc&\bc&\bc&\bc&\bc&\bc&\bc&\\
\ba& 0&1&1&1&1&2&2&2&2&3&3&3&3&3&\undA \\
\ba& 1&0&2&2&3&1&1&3&3&1&2&2&3&3&\undA \\
\ba& 1&2&0&3&2&1&2&1&3&1&1&3&2&3&\undA \\
\ba& 1&2&3&0&2&2&1&3&1&1&3&1&3&2&\undA \\
\ba& 1&3&2&2&0&1&1&2&2&2&3&3&1&1&\undA \\
\ba& 2&1&1&2&1&0&3&3&1&2&3&1&2&3&\undA \\
\ba& 2&1&2&1&1&3&0&1&3&2&1&3&3&2&\undA \\
\ba& 2&3&1&3&2&3&1&0&1&3&2&3&2&1&\undA \\
\ba& 2&3&3&1&2&1&3&1&0&3&3&2&1&2&\undA \\
\ba& 3&1&1&1&2&2&2&3&3&0&2&2&1&1&\undA \\
\ba& 3&2&1&3&3&3&1&2&3&2&0&1&1&2&\undA \\
\ba& 3&2&3&1&3&1&3&3&2&2&1&0&2&1&\undA \\
\ba& 3&3&2&3&1&2&3&2&1&1&1&2&0&2&\undA \\
\ba& 3&3&3&2&1&3&2&1&2&1&2&1&2&0&\undA \\
\bb& \undB&0&3&3&3&1&1&1&1&\undC \\
\bb& \undB&3&0&1&1&3&3&1&1&\undC \\
\bb& \undB&3&1&0&1&3&1&3&1&\undC \\
\bb& \undB&3&1&1&0&1&1&3&3&\undC \\
\bb& \undB&1&3&3&1&0&1&1&3&\undC \\
\bb& \undB&1&3&1&1&1&0&3&3&\undC \\
\bb& \undB&1&1&3&3&1&3&0&1&\undC \\
\bb& \undB&1&1&1&3&3&3&1&0&\undC \\
\bc&    \undD& 0&2&2&1&1&1&1 \\
\bc&    \undD& 2&0&1&2&1&1&1 \\
\bc&    \undD& 2&1&0&1&2&1&1 \\
\bc&    \undD& 1&2&1&0&1&2&1 \\
\bc&    \undD& 1&1&2&1&0&1&2 \\
\bc&    \undD& 1&1&1&2&1&0&2 \\
\bc&    \undD& 1&1&1&1&2&2&0 \\                   
\end{smallmatrix}\right]$}
\caption{One  embedding to search for a $(4,3,3;30)$
  coloring when $deg(v_1)=\tuple{14,8,7}$.   }
        \label{embed_14_8_7}
  \end{center} 
\end{wrapfigure}


We seek a $(4,3,3;30)$ coloring of $K_{30}$, represented as a
$30\times 30$ adjacency matrix $A$.  Let $v_1$ correspond to the the
first row in $A$ with $deg(v_1)=\tuple{d_1,d_2,d_3}$ as prescribed by
Corollary~\ref{cor:degrees}.  For each possible triplet
$\tuple{d_1,d_2,d_3}$, except $\tuple{13,8,8}$, we take each of the
known corresponding colorings for the subgraphs $G^1_{v_1}$,
$G^2_{v_1}$, and $G^3_{v_1}$ and embed them into $A$. We then apply a
SAT solver, to (try to) complete the remaining cells in $A$ to satisfy
$\varphi_{4,3,3;30}(A)$ as defined by
Constraint~(\ref{constraint:coloring}) of Figure~\ref{fig:gcp}. If the
SAT solver fails, then no such completion exists.

To illustrate the approach, consider the case where
$deg(v_1)=\tuple{14,8,7}$. Figure~\ref{embed_14_8_7} details one of
the embeddings corresponding to this case.  The first row and column
of $A$ specify the colors of the edges of the 29 neighbors of $v_1$
(in bold). The symbol ``$\_$'' indicates an integer variable that
takes a value between 1 and 3.
The neighbors of $v_1$ in color~1 form a submatrix of $A$ embedded in
rows (and columns) 2--15 of the matrix in the Figure.  By
Corollary~\ref{obs:embed} these are a $(3,3,3;14)$ Ramsey coloring and
there are 115 possible such colorings modulo weak isomorphism. The
Figure details one of them.
Similarly, there are 3 possible $(4,2,3;8)$ colorings which are
subgraphs for the neighbors of $v_1$ in color~2. In
Figure~\ref{embed_14_8_7}, rows (and columns) 16--23 detail one such
coloring.
Finally, there are 9 possible $(4,3,2;7)$ colorings which are
subgraphs for the neighbors of $v_1$ in color~3. In
Figure~\ref{embed_14_8_7}, rows (and columns) 24--30 detail one such
coloring.

To summarize, Figure~\ref{embed_14_8_7} is a partially instantiated
adjacency matrix. The first row determines the degrees of
$v_1$, in the three colors, and 3 corresponding subgraphs
are embedded.  The uninstantiated values in the matrix must be
completed to obtain a solution that satisfies $\varphi_{4,3,3;30}(A)$
as specified in
Constraint~(\ref{constraint:coloring}) of Figure~\ref{fig:gcp}. This can be
determined using a SAT solver.
%
%
For the specific example in Figure~\ref{embed_14_8_7}, the CNF
generated using our tool set consists of 33{,}959 clauses, involves
5{,}318 Boolean variables, and is shown to be unsatisfiable in 52 seconds of
computation time.
For the case where $v_1$ has degrees $\tuple{14,8,7}$ in the three
colors this is one of $115\times 3\times 9 = 3105$ instances that need
to be checked.

Table~\ref{table:regular} summarizes the experiment which proves
Theorem~\ref{thm:regular}. For each of the possible degrees of
vertex~1 in a $(4,3,3;30)$ coloring as prescribed by
Corollary~\ref{cor:degrees}, except $\tuple{13,8,8}$, and for each
possible choice of colorings for the derived subgraphs $G^1_{v_1}$,
$G^2_{v_1}$, and $G^3_{v_1}$, we apply a SAT solver to show that the
instance $\varphi_{(4,3,3;30)}(A)$ of
Constraint~(\ref{constraint:coloring}) of Figure~\ref{fig:gcp} cannot
be satisfied. The table details for each degree triple, the number of
instances, their average size (number of clauses and Boolean
variables), and the average and total times to show that the
constraint is not satisfiable.

\begin{table}[t]
\begin{center}\scriptsize
\begin{tabular}{|c|rl|c|c|r|r|}
\hline
$v_1$ degrees    & \multicolumn{2}{c|}{\# instances} & \# clauses (avg.) & \# vars (avg.) &
unsat (avg) & unsat (total) \\ \hline
\hline
(16,8,5) & 54 &=~2*3*9   & 32432 & 5279 &  51 sec. &  0.77 hrs.\\ 
\hline
(16,7,6) & 270 &=~2*9*15 & 32460 & 5233 & 420 sec. & 31.50 hrs.\\ 
\hline
(15,8,6) & 90 &=~2*3*15  & 33607 & 5450 &  93 sec. &  2.32 hrs.\\ 
\hline
(15,7,7) & 162 &=~2*9*9  & 33340 & 5326 &1554 sec. & 69.94 hrs.\\ 
\hline
(14,8,7) &3105 &=~115*3*9& 34069 & 5324 & 294 sec. &253.40 hrs.\\ 
\hline
\end{tabular}
\end{center}
\caption{Proving that any $(4,3,3;30)$ Ramsey coloring is $\tuple{13,8,8}$
  regular (summary).}
\label{table:regular}
\end{table}

All of the SAT instances described in the experiment summarized by
Table~\ref{table:regular} are unsatisfiable. The solver reports
``unsat''.  To gain confidence in our implementation, we illustrate
its application on a satisfiable instance: to find a, known to exist,
$(4,3,3;29)$ coloring.  This experiment involves some reverse
engineering.
%
\begin{wrapfigure}[16]{r}{.460\linewidth}\vspace{-10mm}
\newcommand{\uuA}{\_&\_&\_&\_&\_&\_&\_&\_&\_&\_&\_&\_&\_&\_&\_&\_}
\newcommand{\uuB}{\_&\_&\_&\_&\_&\_&\_&\_&\_&\_&\_&\_}
\newcommand{\uuC}{\_&\_&\_&\_&\_&\_&\_&\_}
\newcommand{\uuD}{\_&\_&\_&\_&\_&\_&\_&\_&\_&\_&\_&\_&\_&\_&\_&\_&\_&\_&\_&\_}

\newcommand{\ga}{\textcolor{gray}{1}} 
\newcommand{\gb}{\textcolor{gray}{2}} 
\newcommand{\gc}{\textcolor{gray}{3}} 

\begin{center}
  \resizebox{.95\linewidth}{!}{$\left[
      \begin{smallmatrix}
        \bz&\ba&\ba&\ba&\ba&\ba&\ba&\ba&\ba&\ba&\ba&\ba&\ba&\bb&\bb&\bb&\bb&\bb&\bb&\bb&\bb&\bc&\bc&\bc&\bc&\bc&\bc&\bc&\bc\\
        \ba&\bz&\ba&\bc&\ba&\bc&\bb&\bb&\bb&\bb&\ba&\bc&\ba&\gb&\gb&\ga&\gc&\ga&\ga&\ga&\gb&\gc&\gc&\ga&\gb&\gc&\ga&\ga&\gc\\
        \ba&\ba&\bz&\ba&\bc&\bc&\ba&\bb&\ba&\bc&\bb&\ba&\bc&\ga&\gb&\gc&\gc&\ga&\gb&\gb&\gb&\ga&\gc&\ga&\ga&\gc&\ga&\gb&\gb\\
        \ba&\bc&\ba&\bz&\ba&\bb&\bc&\bb&\bb&\bc&\bc&\bb&\ba&\ga&\ga&\gc&\ga&\ga&\gb&\gc&\gb&\ga&\ga&\gc&\gc&\gb&\ga&\gb&\ga\\
        \ba&\ba&\bc&\ba&\bz&\ba&\bc&\ba&\bb&\ba&\bc&\bc&\bb&\gc&\gb&\gb&\gb&\ga&\gb&\gc&\ga&\gb&\ga&\gc&\ga&\ga&\ga&\gb&\gc\\
        \ba&\bc&\bc&\bb&\ba&\bz&\ba&\bc&\bc&\bb&\bb&\ba&\bb&\ga&\gb&\ga&\gb&\ga&\gc&\gc&\ga&\gb&\ga&\gb&\gc&\ga&\gc&\ga&\ga\\
        \ba&\bb&\ba&\bc&\bc&\ba&\bz&\ba&\bc&\ba&\bb&\bb&\bb&\ga&\ga&\ga&\gb&\gb&\gb&\ga&\gc&\gc&\gc&\gb&\ga&\ga&\gc&\gc&\ga\\
        \ba&\bb&\bb&\bb&\ba&\bc&\ba&\bz&\ba&\bc&\bc&\ba&\bb&\gc&\ga&\gb&\gb&\gb&\ga&\ga&\ga&\ga&\gc&\gc&\ga&\ga&\gb&\gc&\gc\\
        \ba&\bb&\ba&\bb&\bb&\bc&\bc&\ba&\bz&\ba&\bb&\bc&\bc&\ga&\gc&\gc&\ga&\gb&\ga&\gb&\ga&\ga&\ga&\gc&\ga&\gc&\gb&\ga&\gb\\
        \ba&\bb&\bc&\bc&\ba&\bb&\ba&\bc&\ba&\bz&\ba&\bc&\ba&\ga&\gc&\gb&\ga&\gb&\gb&\gb&\gc&\gc&\gb&\ga&\ga&\ga&\gc&\ga&\gb\\
        \ba&\ba&\bb&\bc&\bc&\bb&\bb&\bc&\bb&\ba&\bz&\ba&\bc&\gb&\gc&\gb&\ga&\ga&\gc&\ga&\ga&\ga&\gb&\ga&\gb&\gc&\gc&\ga&\ga\\
        \ba&\bc&\ba&\bb&\bc&\ba&\bb&\ba&\bc&\bc&\ba&\bz&\ba&\gb&\gb&\gb&\ga&\gc&\gc&\gb&\ga&\gb&\gb&\ga&\gc&\ga&\ga&\gc&\ga\\
        \ba&\ba&\bc&\ba&\bb&\bb&\bb&\bb&\bc&\ba&\bc&\ba&\bz&\gb&\ga&\ga&\ga&\gc&\ga&\gb&\gb&\gc&\ga&\ga&\gc&\gb&\ga&\gc&\gc\\
        \bb&\gb&\ga&\ga&\gc&\ga&\ga&\gc&\ga&\ga&\gb&\gb&\gb&\bz&\ba&\bc&\bc&\bc&\ba&\ba&\ba&\gc&\ga&\ga&\gc&\gc&\gb&\gb&\gb\\
        \bb&\gb&\gb&\ga&\gb&\gb&\ga&\ga&\gc&\gc&\gc&\gb&\ga&\ba&\bz&\ba&\bc&\ba&\ba&\bc&\ba&\ga&\gc&\ga&\gc&\gb&\gb&\ga&\gc\\
        \bb&\ga&\gc&\gc&\gb&\ga&\ga&\gb&\gc&\gb&\gb&\gb&\ga&\bc&\ba&\bz&\ba&\ba&\bc&\ba&\ba&\ga&\ga&\gb&\ga&\gb&\gc&\gc&\gc\\
        \bb&\gc&\gc&\ga&\gb&\gb&\gb&\gb&\ga&\ga&\ga&\ga&\ga&\bc&\bc&\ba&\bz&\ba&\ba&\ba&\bc&\ga&\gb&\gc&\gb&\ga&\gc&\gc&\gb\\
        \bb&\ga&\ga&\ga&\ga&\ga&\gb&\gb&\gb&\gb&\ga&\gc&\gc&\bc&\ba&\ba&\ba&\bz&\ba&\bc&\bc&\gb&\gc&\gc&\ga&\gb&\gc&\gb&\ga\\
        \bb&\ga&\gb&\gb&\gb&\gc&\gb&\ga&\ga&\gb&\gc&\gc&\ga&\ba&\ba&\bc&\ba&\ba&\bz&\ba&\bc&\gc&\gc&\gc&\gb&\ga&\gb&\ga&\ga\\
        \bb&\ga&\gb&\gc&\gc&\gc&\ga&\ga&\gb&\gb&\ga&\gb&\gb&\ba&\bc&\ba&\ba&\bc&\ba&\bz&\ba&\gc&\ga&\gb&\gb&\gc&\ga&\gc&\ga\\
        \bb&\gb&\gb&\gb&\ga&\ga&\gc&\ga&\ga&\gc&\ga&\ga&\gb&\ba&\ba&\ba&\bc&\bc&\bc&\ba&\bz&\gb&\gb&\gb&\gc&\gc&\ga&\ga&\gc\\
        \bc&\gc&\ga&\ga&\gb&\gb&\gc&\ga&\ga&\gc&\ga&\gb&\gc&\gc&\ga&\ga&\ga&\gb&\gc&\gc&\gb&\bz&\ba&\ba&\bb&\ba&\bb&\ba&\bb\\
        \bc&\gc&\gc&\ga&\ga&\ga&\gc&\gc&\ga&\gb&\gb&\gb&\ga&\ga&\gc&\ga&\gb&\gc&\gc&\ga&\gb&\ba&\bz&\ba&\ba&\bb&\bb&\bb&\ba\\
        \bc&\ga&\ga&\gc&\gc&\gb&\gb&\gc&\gc&\ga&\ga&\ga&\ga&\ga&\ga&\gb&\gc&\gc&\gc&\gb&\gb&\ba&\ba&\bz&\ba&\ba&\bb&\bb&\bb\\
        \bc&\gb&\ga&\gc&\ga&\gc&\ga&\ga&\ga&\ga&\gb&\gc&\gc&\gc&\gc&\ga&\gb&\ga&\gb&\gb&\gc&\bb&\ba&\ba&\bz&\bb&\ba&\bb&\ba\\
        \bc&\gc&\gc&\gb&\ga&\ga&\ga&\ga&\gc&\ga&\gc&\ga&\gb&\gc&\gb&\gb&\ga&\gb&\ga&\gc&\gc&\ba&\bb&\ba&\bb&\bz&\ba&\ba&\bb\\
        \bc&\ga&\ga&\ga&\ga&\gc&\gc&\gb&\gb&\gc&\gc&\ga&\ga&\gb&\gb&\gc&\gc&\gc&\gb&\ga&\ga&\bb&\bb&\bb&\ba&\ba&\bz&\ba&\ba\\
        \bc&\ga&\gb&\gb&\gb&\ga&\gc&\gc&\ga&\ga&\ga&\gc&\gc&\gb&\ga&\gc&\gc&\gb&\ga&\gc&\ga&\ba&\bb&\bb&\bb&\ba&\ba&\bz&\ba\\
        \bc&\gc&\gb&\ga&\gc&\ga&\ga&\gc&\gb&\gb&\ga&\ga&\gc&\gb&\gc&\gc&\gb&\ga&\ga&\ga&\gc&\bb&\ba&\bb&\ba&\bb&\ba&\ba&\bz
      \end{smallmatrix}\right]$}
\end{center}
\caption{Embedding (boldface) and solution (gray text) for a
  $(4,3,3;29)$ Ramsey coloring.}
        \label{embed_12_8_8}
\end{wrapfigure}
In 1966 Kalbfleisch~\cite{kalb66} reported the existence of a
circulant $(3,4,4;29)$ coloring. Encoding instance
$\varphi_{(4,3,3;29)}(A)$ of Constraint~(\ref{constraint:coloring})
together with a constraint that states that the adjacency
matrix $A$ is circulant, results in a CNF with 146{,}506 clauses and
8{,}394 variables. Using a SAT solver, we obtain a corresponding
$(4,3,3;29)$ coloring in less than two seconds of computation
time. The solution is $\tuple{12,8,8}$ regular and isomorphic to the
adjacency matrix depicted as Figure~\ref{embed_12_8_8}. 
Now we apply the embedding approach.  We take the partial solution
(the boldface elements) corresponding to the three subgraphs:
$G^1_{v_1}$, $G^2_{v_1}$ and $G^3_{v_1}$ which are respectively
$(3,3,3;12)$, $(4,2,3;8)$ and $(4,3,2;8)$ Ramsey colorings.  Applying
a SAT solver to complete this partial solution to a $(4,3,3;29)$
coloring satisfying Constraint~(\ref{constraint:coloring}) involves a
CNF with 30{,}944 clauses and 4{,}736 variables and requires under two
hours of computation time. Figure~\ref{embed_12_8_8} portrays the
solution (the gray elements).

To apply the embedding approach described in this section to determine
if there exists a $(4,3,3;30)$ Ramsey coloring which is
$\tuple{13,8,8}$ regular would require access to the set
$\RR(3,3,3;13)$.  We defer this discussion until after
Section~\ref{sec:33313b} where we describe how we compute the set of
all 78{,}892 $(3,3,3;13)$ Ramsey colorings modulo weak isomorphism.

\section{Symmetry Breaking: Computing $\RR(r_1,\ldots,r_k;n)$}
\label{sec:symBreak}


In this section we prepare the ground to apply a SAT solver to find
the set of all $(r_1,\ldots,r_k;n)$ Ramsey colorings modulo weak
isomorphism. 
The constraints are those presented in Figure~\ref{fig:gcp} and
their encoding to CNF is as described in Section~\ref{sec:embed}. Our
final aim is to compute the set of all $(3,3,3;13)$ colorings modulo
weak isomorphism. Then we can apply the embedding technique of
Section~\ref{sec:embed} to determine the existence of a
$\tuple{13,8,8}$ regular $(4,3,3;30)$ Ramsey coloring. Given
Theorem~\ref{thm:regular}, this will determine the value of
$R(4,3,3)$.

Solving hard search problems on graphs, and graph coloring problems in
particular, relies heavily on breaking symmetries in the search space.
When searching for a graph, the names of the vertices do not
matter, and restricting the search modulo graph isomorphism is highly
beneficial. When searching for a graph coloring, on top of graph
isomorphism, solutions are typically closed under permutations of the
colors: the names of the colors do not matter and the term
often used is ``weak isomorphism''~\cite{PR98} (the equivalence
relation is weaker because both node names and edge colors do not
matter).
When the problem is to compute the set of all solutions modulo (weak)
isomorphism the task is even more challenging. Often one first
attempts to compute all the solutions of the coloring problem, and to
then apply one of the available graph isomorphism tools, such as
\texttt{nauty}~\cite{nauty} to select representatives of their
equivalence classes modulo (weak) isomorphism. This is a
\emph{generate and test} approach. However, typically the number of
solutions is so large that this approach is doomed to fail even though
the number of equivalence classes itself is much smaller. The problem
is that tools such as \texttt{nauty} apply after, and not during,
generation.
To this end, we follow~\cite{CodishMPS14} where Codish \etal\ show
that the symmetry breaking approach of
\cite{DBLP:conf/ijcai/CodishMPS13} holds also for graph coloring
problems where the adjacency matrix consists of integer variables.
This is a \emph{constrain and generate approach}. But, as symmetry
breaking does not break all symmetries, it is still necessary to
perform some reduction using a tool like \texttt{nauty}.\footnote{Note
  that \texttt{nauty} does not directly handle edge colored graphs and
  weak isomorphism directly. We applied an approach called
  $k$-layering described by Derrick Stolee~\cite{Stolee}.}  This form
of symmetry breaking is an important component in our methodology.

\begin{definition}\textbf{\cite{DBLP:conf/ijcai/CodishMPS13}.}
\label{def:SBlexStar}
Let $A$ be an $n\times n$ adjacency matrix. Then, 
\begin{equation}\label{eq:symbreak}
 \SB^*_\ell(A) = \bigwedge\sset 
  {A_{i}\preceq_{\{i,j\}}A_{j}}{i<j}
\end{equation}
where $A_{i}\preceq_{\{i,j\}}A_{j}$ denotes the lexicographic order
between the $i^{th}$ and $j^{th}$ rows of $A$ (viewed as strings)
omitting the elements at positions $i$ and~$j$ (in both rows).
\end{definition} 

We omit the precise details of how Constraint~(\ref{eq:symbreak}) is
encoded to CNF. In our implementation this is performed by the finite
domain constraint compiler \bee\ and details can be found in
\cite{jair2013}.  Table~\ref{tab:333n1} illustrates the impact of the
symmetry breaking Constraint~(\ref{eq:symbreak}) on the search for the
Ramsey colorings required in the proof of
Theorem~\ref{thm:regular}.

\begin{table}[t]
  \centering{\scriptsize
  \begin{tabular}{|c|r|rrrr|rrrr|}
\hline
Instance & \#${\setminus}_{\approx}$ & \multicolumn{4}{c|}{no sym break} 
                               & \multicolumn{4}{c|}{with sym break}
\\
\hline
  &&      vars & clauses & time\quad & \#\quad      
          & vars & clauses & time\quad & \#\quad\\
\hline
(4,3;5)& 9 &10&15&0.02&322 &24&85&0.01&13\\
(4,3;6)&15 &15&35&0.35&2812 &48&200&0.01&31\\
(4,3;7)& 9 &21&70&9.27&13842 &85&390&0.01&45\\
(4,3;8)& 3 &28&126&19.46&17640 &138&676&0.01&20\\
\hline
(3,3,3;16) & 2   &360 & 2160 &\quad$\infty~$&\quad?~~~ &3328  & 17000  &0.14      &6\\
(3,3,3;15) & 2   &315 & 1785 &\quad$\infty~$&\quad?~~~ &2707  & 13745  &0.37      &66\\
(3,3,3;14) & 115 &273 & 1456 &\quad$\infty~$&\quad?~~~ &2169  & 10936
&~~259.56  &~~24635\\
\hline
(3,3,3;13) & ?~  &234 & 1170 &\quad$\infty~$&\quad?~~~  &1708 & 8540   &$\infty\quad$ &\quad?~~~\\
    \hline
  \end{tabular}}
\caption{Computing  Ramsey colorings with
  and without the symmetry break Constraint~(\ref{eq:symbreak})
  (time in seconds with 24 hr. timeout marked by $\infty$).}
\label{tab:333n1}
\end{table}

The first four rows in the table portray the required instances of the forms
$(4,3,2;n)$ and $(4,2,3;n)$ which by definition correspond to
$(4,3;n)$ colorings (respectively in colors 1 and 3, and in colors 1
and 2).
The next three rows correspond to $(3,3,3;n)$ colorings where
$n\in\{14,15,16\}$.
The last row illustrates our
failed attempt to apply a SAT encoding to compute $\RR(3,3,3;13)$.
%
The first column in the table specifies the instance.  The column
headed by ``\#${\setminus}_{\approx}$'' specifies the known (except
for the last row) number of colorings modulo weak
isomorphism~\cite{Rad}.
The columns headed by ``vars'' and ``clauses'' indicate, the
numbers of variables and clauses in the corresponding CNF encodings of
the coloring problems with and without the symmetry breaking
Constraint~(\ref{eq:symbreak}).  The columns headed by ``time''
indicate the time (in seconds) to find all colorings iterating with a
SAT solver. The timeout assumed here is 24 hours. The column headed by
``\#'' specifies the number of colorings found by iterated SAT solving.

In the first four rows, notice the impact of symmetry breaking which
reduces the number of solutions by 1--3 orders of magnitude. In the
next three rows the reduction is more acute. Without symmetry breaking
the colorings cannot be computed within the 24 hour timeout. The sets
of colorings obtained with symmetry breaking have been verified to
reduce, using \texttt{nauty}~\cite{nauty}, to the known number of
colorings modulo weak isomorphism indicated in the second column.

\section{Abstraction: Degree Matrices for Graph Colorings}
\label{sec:abs}

This section introduces an abstraction on graph colorings defined in
terms of \emph{degree matrices}. The motivation is to solve a hard
graph coloring problem by first searching for its degree matrices.
Degree matrices are to graph coloring problems as degree
sequences~\cite{ErdosGallai1960} are to graph search problems. A
degree sequence is a monotonic nonincreasing sequence of the vertex
degrees of a graph. A graphic sequence is a sequence which can be the
degree sequence of some graph.

The idea underlying our approach is that when the combinatorial
problem at hand is too hard, then possibly solving an abstraction of
the problem is easier. In this case, a solution of the abstract
problem can be used to facilitate the search for a solution of the
original problem.


\begin{definition}[\textbf{degree matrix}]
\label{def:dm}
  Let $A$ be a graph coloring on $n$ vertices with $k$ colors. The
  \emph{degree matrix} of $A$, denoted $dm(A)$ is an $n\times k$
  matrix, $M$ such that $M_{i,j} = deg_j(i)$ is the degree of vertex
  $i$ in color $j$. 
\end{definition}

\begin{wrapfigure}[7]{r}{.33\linewidth}\vspace{-11mm}
\begin{center}
  \resizebox{.80\linewidth}{!}{$\left[\left.
      \begin{smallmatrix}
        12 & 8 & 8 \\
        ~ \vdots\\[1ex]
        12 & 8 & 8 \\
      \end{smallmatrix}\right]\right\}\mbox{\scriptsize29 rows}$}
  \end{center}
\caption{\small A  degree matrix.}
\label{fig:dm}
\end{wrapfigure}
Figure~\ref{fig:dm} illustrates the degree matrix of the graph
coloring given as Figure~\ref{embed_12_8_8}. The three columns
correspond to the three colors and the 29 rows to the 29 vertices. The
degree matrix consists of 29 identical rows as the corresponding graph
coloring is $\tuple{12,8,8}$ regular.

A degree matrix $M$ represents the set of graphs $A$ such that
$dm(A)=M$. 
Due to properties of weak-isomorphism (vertices as well as colors can
be reordered) we can exchange both rows and columns of a degree matrix
without changing the set of graphs it represents.  In the rest of our
construction we adopt a representation in which the rows and columns
of a degree matrix are sorted lexicographically.

\begin{definition}[\textbf{lex sorted degree matrix}]
  For an $n\times k$ degree matrix $M$ we denote by $lex(M)$ the smallest
  matrix with rows and columns in the lexicographic order
  (non-increasing) obtained by permuting rows and columns of $M$.
\end{definition}

\begin{definition}[\textbf{abstraction}]
\label{def:abs}
Let $A$ be a graph coloring on $n$ vertices with $k$ colors. The
\emph{abstraction} of $A$ to a degree matrix is
$\alpha(A)=lex(dm(A))$. For a set $\AA$ of graph colorings we denote
$\alpha(\AA) = \sset{\alpha(A)}{A\in\AA}$.
\end{definition}

Note that if $A$ and $A'$ are weakly isomorphic, then
$\alpha(A)=\alpha(A')$.

\begin{definition}[\textbf{concretization}]
\label{def:conc}
  Let $M$ be an $n\times k$ degree matrix. Then, $\gamma(M) =
  \sset{A}{\alpha(A)=M}$ is the set of graph colorings
  represented by $M$. For a set $\MM$ of degree matrices we denote
  $\gamma(\MM) = \cup\sset{\gamma(M)}{M\in\MM}$.
\end{definition}

Let $\varphi(A)$ be a graph coloring problem in $k$ colors on an
$n\times n$ adjacency matrix, $A$.
Our strategy to compute $\AA=sol(\varphi(A))$ is to first compute an
over-approximation $\MM$ of degree matrices such that
$\gamma(\MM)\supseteq\AA$ and to then use $\MM$ to guide the
computation of $\AA$.
We denote the set of solutions of the graph coloring problem,
$\varphi(A)$, which have a given degree matrix, $M$, by
$sol_M(\varphi(A))$. Then
\begin{eqnarray}
\label{eq:approx}   
       sol(\varphi(A)) &=&  \bigcup_{M\in\MM} sol_M(\varphi(A))\\
\label{eq:solM}
	sol_M(\varphi(A)) & = & sol(\varphi(A)\wedge\alpha(A){=}M)
\end{eqnarray}
%
%

Equation~(\ref{eq:approx}) implies that, we can compute the solutions to a
graph coloring problem $\varphi(A)$ by computing the independent sets
$sol_M(\varphi(A))$ for any over approximation $\MM$ 
of the degree matrices of the solutions of $\varphi(A)$.
%
This facilitates the computation for two reasons:
(1) The problem is now broken into a set of independent sub-problems
for each $M\in\MM$ which can be solved in parallel, and
(2) The computation of each individual $sol_M(\varphi(A))$ is now
directed using $M$. 

The constraint $\alpha(A){=}M$ in the right side of
Equation~(\ref{eq:solM}) is encoded to SAT by introducing (encodings
of) cardinality constraints. For each row of the matrix $A$ the
corresponding row in $M$ specifies the number of elements with value
$c$ (for $1\leq c\leq k$) that must be in that row. We omit the
precise details of the encoding to CNF. In our implementation this is
performed by the finite domain constraint compiler \bee\ and details
can be found in \cite{jair2013}.

When computing $sol_M(\varphi(A))$ for a given degree matrix we can no
longer apply the symmetry breaking Constraint~(\ref{eq:symbreak}) as
it might constrain the rows of $A$ in a way that contradicts the
constraint $\alpha(A)=M$ in the right side of
Equation~(\ref{eq:solM}). However, we can refine
Constraint~(\ref{eq:symbreak}, to break symmetries on the rows of $A$
only when the corresponding rows in $M$ are equal. Then $M$ can be
viewed as inducing an ordered partition of $A$ and
Constraint~(\ref{eq:sbdm}) is, in the terminology
of~\cite{DBLP:conf/ijcai/CodishMPS13}, a partitioned lexicographic
symmetry break.
In the following, $M_i$ and $M_j$ denote the $i^{th}$ and $j^{th}$
rows of matrix $M$.
%
%
\begin{equation}\label{eq:sbdm}
  \SB^*_\ell(A,M) = 
       \bigwedge_{i<j} \left(\begin{array}{l}
         \big(M_i=M_j\Rightarrow A_i\preceq_{\{i,j\}} A_j\big)
      \end{array}\right)
  \end{equation}
  The following refines Equation~(\ref{eq:solM}) introducing  the
  symmetry breaking predicate.
\begin{equation}
  \label{eq:scenario1}
  sol_M(\varphi(A)) = sol(\varphi(A)\wedge (\alpha(A){=}M) \wedge\SB^*_\ell(A,M))
\end{equation}

To justify that Equations~(\ref{eq:solM}) and~(\ref{eq:scenario1})
both compute $sol_M(\varphi(A))$, modulo weak isomorphism, we must show that
if $\SB^*_\ell(A,M)$ excludes a solution then there is another weakly
isomorphic solution that is not excluded. 

\begin{theorem}[\textbf{correctness of $\SB^{*}_\ell(A,M)$}]
\label{thm:sbl_star}
  Let $A$ be an adjacency matrix with $\alpha(A) = M$. Then, there
  exists $A'\approx A$ such that $\alpha(A')=M$ and 
  $\SB^{*}_\ell(A',M)$ holds.
\end{theorem}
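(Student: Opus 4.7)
The plan is to exhibit $A'$ by minimizing a total order over a suitable orbit of $A$ and then to verify the symmetry-breaking constraint through the action of transpositions. First I would reduce to the case $dm(A')=M$ exactly: since $\alpha(A)=lex(dm(A))=M$, there are vertex and color permutations that turn $dm(A)$ into $M$, and applying them to $A$ produces an $A_0\approx A$ with $dm(A_0)=M$. Because $M$ is already lex-sorted, $dm(A_0)=M$ entails $\alpha(A_0)=M$, so the set $\mathcal{S}=\sset{B\approx A}{dm(B)=M}$ is nonempty and every member meets the $\alpha$-requirement of the theorem.

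Next, let $G\leq S_n$ be the subgroup of vertex permutations satisfying $M_{\pi(i)}=M_i$ for all $i$, equivalently the direct product of symmetric groups on the blocks of indices sharing a row of $M$. Then $\mathcal{S}$ is $G$-closed, and I would pick $A'$ to be lex-smallest (under the concatenated-row lex order) in $\mathcal{S}$. For any $i<j$ with $M_i=M_j$, the transposition $\tau=(i\ j)$ lies in $G$, so $\tau{\cdot}A'\in\mathcal{S}$, and minimality yields $A'\preceq\tau{\cdot}A'$ lexicographically. The remaining task is to derive $A'_i\preceq_{\{i,j\}}A'_j$ from this inequality.

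The verification splits according to whether some row $b<i$ distinguishes columns $i$ and $j$. In Case A, where some $b<i$ has $A'_b[i]\neq A'_b[j]$, the smallest such $b_0$ is the row where $A'$ and $\tau{\cdot}A'$ first diverge; the lex inequality forces $A'_{b_0}[i]<A'_{b_0}[j]$, and by the symmetry of the adjacency matrix this transports to $A'_i[b_0]<A'_j[b_0]$ with $A'_i[b]=A'_j[b]$ for all $b<b_0$. Since $b_0<i$ lies in $[n]\setminus\{i,j\}$, this already gives $A'_i\prec_{\{i,j\}}A'_j$. In Case B, where $A'_b[i]=A'_b[j]$ for every $b<i$, rows $1,\dots,i{-}1$ of $\tau{\cdot}A'$ coincide with those of $A'$; a direct check using symmetry shows row $i$ of $\tau{\cdot}A'$ differs from row $i$ of $A'$ only on columns $\{i{+}1,\dots,n\}\setminus\{j\}$, where it reads $A'_j[c]$ in place of $A'_i[c]$. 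Minimality at row $i$ therefore forces $A'_i\preceq A'_j$ on those columns, which together with the equalities on $\{1,\dots,i{-}1\}$ amounts to exactly $A'_i\preceq_{\{i,j\}}A'_j$.

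The main obstacle I expect is Case A: the transposition $\tau$ alters a row earlier than row $i$, so minimality gives information about row $b_0$ rather than rows $i$ and $j$ themselves. The key idea is the symmetry $A'_{b_0}[i]=A'_i[b_0]$, which converts the inequality at row $b_0$ into an inequality between rows $i$ and $j$ at column $b_0$; this also explains the precise form of $\preceq_{\{i,j\}}$, which omits exactly the two columns that $\tau$ scrambles in each row.
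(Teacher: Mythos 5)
Your proof is correct and follows essentially the same strategy as the paper's: both take the lexicographically minimal matrix in the weak-isomorphism class of $A$ restricted to degree matrix $M$ and argue that this minimum satisfies $\SB^{*}_\ell(\cdot,M)$. The only difference is that the paper delegates the final step to Theorem~4 of~\cite{DBLP:conf/ijcai/CodishMPS13} (viewing $M$ as inducing an ordered partition), whereas you inline a correct, self-contained proof of that step via the transposition/case analysis, including the key point that minimality over the block-preserving transpositions $(i\;j)$ yields exactly the comparison $A'_i\preceq_{\{i,j\}}A'_j$ with positions $i,j$ omitted.
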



\section{Computing Degree Matrices for $R(3,3,3;13)$}
\label{sec:33313}

This section describes how we compute a set of degree matrices
that approximate those of the solutions of instance
$\varphi_{(3,3,3;13)}(A)$ of Constraint~(\ref{constraint:coloring}). We
apply a strategy mixing SAT solving with brute-force
enumeration as follows. The computation of the degree matrices is
summarized in Table~\ref{tab:333_computeDMs}.
In the first step, we compute bounds on the degrees of the nodes in
any $R(3,3,3;13)$ coloring. 

\begin{lemma}\label{lemma:db}
  Let $A$ be an $R(3,3,3;13)$ coloring then for every vertex $x$ in $A$,
  and color $c\in\{1,2,3\}$, $2\leq deg_{c}(x)\leq 5$.
\end{lemma}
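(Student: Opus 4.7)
The plan is to establish the upper bound $deg_c(x)\leq 5$ directly from the defining property of a $(3,3,3;13)$ coloring together with the classical fact $R(3,3)=6$, and then derive the lower bound $deg_c(x)\geq 2$ from the upper bound together with the obvious identity $deg_1(x)+deg_2(x)+deg_3(x)=12$ (since every vertex in $K_{13}$ has exactly $12$ neighbors).

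For the upper bound, I would fix a vertex $x$ and a color $c\in\{1,2,3\}$ and consider the induced sub-coloring $A^c_x$ on $N_c(x)$, as defined in Section~\ref{sec:prelim}. Any edge $(u,v)$ between two vertices $u,v\in N_c(x)$ cannot itself be colored $c$: otherwise $\{x,u,v\}$ would form a monochromatic $K_3$ in color $c$, contradicting $A\in\RR(3,3,3;13)$. Hence the induced coloring on $N_c(x)$ uses only the two remaining colors. Now if $deg_c(x)\geq 6$, then $A^c_x$ is a $2$-coloring of a complete graph on at least $6$ vertices, and by $R(3,3)=6$ it must contain a monochromatic triangle. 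That monochromatic triangle is again a monochromatic $K_3$ of $A$ (in one of the two remaining colors), contradicting $A\in\RR(3,3,3;13)$. Therefore $deg_c(x)\leq 5$ for every $x$ and every $c$.

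For the lower bound, since $A$ is a coloring of $K_{13}$, every vertex $x$ satisfies $deg_1(x)+deg_2(x)+deg_3(x)=12$. Combined with the just-established inequality $deg_c(x)\leq 5$ for each color, we get $deg_c(x)=12-\sum_{c'\neq c}deg_{c'}(x)\geq 12-5-5=2$. This yields $2\leq deg_c(x)\leq 5$ as claimed.

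The only non-routine ingredient is the appeal to $R(3,3)=6$, which is standard and may be quoted without proof; the rest is elementary counting. I therefore do not anticipate a real obstacle — the argument is essentially a two-line application of a classical Ramsey bound followed by a pigeonhole on a sum of three nonnegative integers.
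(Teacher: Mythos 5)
Your proof is correct, but it takes a genuinely different route from the paper. The paper establishes this lemma purely computationally: it encodes the instance $\varphi_{(3,3,3;13)}(A)$ of Constraint~(\ref{constraint:coloring}) together with the requirement that some vertex have a color degree below $2$ or above $5$, and reports that the resulting CNF (13{,}672 clauses, 2{,}748 variables) is UNSAT after about 15 seconds of SAT solving. You instead give the classical combinatorial argument: the color-$c$ neighborhood $N_c(x)$ induces a complete graph colored with only the two remaining colors (an edge of color $c$ inside $N_c(x)$ would close a monochromatic triangle with $x$), so $R(3,3)=6$ forces $|N_c(x)|\leq 5$; the lower bound then follows from $deg_1(x)+deg_2(x)+deg_3(x)=12$ and the upper bound on the other two colors. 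This is exactly the argument underlying the bound $R(3,3,3)\leq 17$, and both steps are sound. What each approach buys: yours is a short, human-checkable proof that depends only on the standard fact $R(3,3)=6$ and is immune to the tool-chain-correctness caveats the authors themselves raise in their conclusion; the paper's version keeps the lemma uniform with the rest of its pipeline (every claim discharged by the same SAT machinery, no external Ramsey facts imported) at the cost of being a black-box UNSAT certificate. If anything, your argument strengthens the paper, since this particular lemma does not actually require any computation.
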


\begin{proof}
  By solving instance $\varphi_{(3,3,3;13)}(A)$ of
  Constraint~(\ref{constraint:coloring})
  %
  seeking a graph with some degree less than 2 or greater than 5.  The
  CNF encoding is of size 13{,}672 clauses with 2{,}748 Boolean
  variables and takes under 15 seconds to solve and yields an UNSAT
  result which implies that such a graph does not exist.
  %
\end{proof}

In the second step, we enumerate the degree sequences with values
within the bounds specified by Lemma~\ref{lemma:db}. Recall that the
degree sequence of an undirected graph is the non-increasing sequence
of its vertex degrees. Not every non-increasing sequence of integers
corresponds to a degree sequence. A sequence that corresponds to a
degree sequence is said to be graphical. The number of degree
sequences of graphs with 13 vertices is 836{,}315 (see Sequence number
\texttt{A004251} of The On-Line Encyclopedia of Integer Sequences
published electronically at \url{http://oeis.org}). However, when the
degrees are bound by Lemma~\ref{lemma:db} there are only 280.

\begin{lemma}\label{lemma:ds}
  There are 280 degree sequences with values between $2$ and $5$.
\end{lemma}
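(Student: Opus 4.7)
The plan is to reduce the lemma to a finite, mechanical enumeration certified by the Erdős–Gallai theorem (or equivalently by the Havel–Hakimi algorithm). First I would observe that since each $d_i$ lies in $\{2,3,4,5\}$ and the sequence is non-increasing of length $13$, the raw count of candidate sequences is the number of multisets of size $13$ from a $4$-element set, namely $\binom{16}{3}=560$. This is a very small search space, so any systematic sieve applied to it will finish trivially.

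Next I would filter the $560$ candidates by two classical necessary-and-sufficient conditions for graphicality. The first is the parity condition $\sum_{i=1}^{13} d_i \equiv 0 \pmod{2}$, which by itself roughly halves the list. The second is the Erdős–Gallai inequality: a non-increasing sequence $d_1\ge\cdots\ge d_n$ of non-negative integers is graphical if and only if its sum is even and, for every $k\in[n]$,
\[
\sum_{i=1}^{k} d_i \;\le\; k(k-1) + \sum_{i=k+1}^{n} \min(d_i,k).
\]
For $n=13$ this is at most $13$ inequalities per candidate, each of which can be evaluated in constant time after computing prefix sums. An equivalent alternative is to run Havel–Hakimi on each candidate; either option is easy to implement and independently verifiable.

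I would then write a short program (or, since the numbers are small, a by-hand table) that iterates through all non-increasing sequences $(d_1,\ldots,d_{13})$ with $d_i\in\{2,3,4,5\}$, discards those with odd sum, and applies the Erdős–Gallai test to the rest, counting the survivors. The claim of the lemma is that exactly $280$ sequences survive.

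The main obstacle is essentially bookkeeping rather than mathematics: there is no subtle argument to find, only an enumeration to carry out and a count to verify. As a sanity check I would cross-validate the count by generating the same list with a second independent method (Havel–Hakimi reduction, or direct construction of a realizing graph for each surviving sequence), and, if desired, compare against the OEIS data for restricted-degree graphical sequences. The constraint $2\le d_i\le 5$ rules out the degenerate extremes (isolated vertices and vertices of degree $\ge 6$), so no special cases need be treated separately.
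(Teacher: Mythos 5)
Your proposal is correct and matches the paper's approach exactly: the paper's proof is simply ``Straightforward enumeration using the algorithm of Erd\H{o}s and Gallai,'' which is precisely the sieve (over the $\binom{16}{3}=560$ candidate non-increasing sequences, filtered by parity and the Erd\H{o}s--Gallai graphicality test) that you describe. Your added details on the candidate count and cross-validation are sensible elaborations of the same computation.
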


\begin{proof}
  Straightforward enumeration using the algorithm of Erd{\"{o}}s and
  Gallai~\cite{ErdosGallai1960}.
\end{proof}

In the third step, we test the 280 degree sequences identified
by Lemma~\ref{lemma:ds} to determine which of them might occur as the
left column in a degree matrix.

\begin{lemma}\label{lemma:ds2}
  Let $A$ be a $R(3,3,3;13)$ coloring and let $M=\alpha(A)$. Then, (a)
  the left column of $M$ is one of the 280 degree sequences identified
  in Lemma~\ref{lemma:ds}; and (b) there are only 80 degree sequences
  from the 280 which are the left column of $\alpha(A)$ for some
  coloring $A$ in $R(3,3,3;13)$.
\end{lemma}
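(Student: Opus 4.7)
The plan is to prove (a) by a short structural argument and (b) by a batch of targeted SAT calls, one per candidate sequence. For (a): By Lemma~\ref{lemma:db}, every entry of $M = \alpha(A) = lex(dm(A))$ lies in $\{2,3,4,5\}$. By the definition of $lex$ the rows of $M$ are sorted lex non-increasing, so its leftmost column is itself non-increasing and is a permutation of one of the three columns of $dm(A)$. That column records $deg_c(1),\ldots,deg_c(13)$ for some fixed color $c$, and these are precisely the vertex degrees of the simple graph obtained from $A$ by retaining only its color-$c$ edges. Hence the sorted column is a graphic sequence on $13$ vertices with values in $[2,5]$, which by Lemma~\ref{lemma:ds} is one of the $280$ enumerated sequences.

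For (b): I would test each of the $280$ sequences $s=(s_1,\ldots,s_{13})$ for realizability by a separate SAT call. Because $\RR(3,3,3;13)$ is closed under color permutations (Lemma~\ref{lemma:closed}), $s$ is the leftmost column of $\alpha(A)$ for some $A\in\RR(3,3,3;13)$ iff there exist $A\in\RR(3,3,3;13)$ and a color $c$ such that the sorted color-$c$ degree sequence of $A$ equals $s$ and is lex-largest among the three. Fixing $c=1$, I would build the SAT instance by conjoining $\varphi_{(3,3,3;13)}(A)$ from Constraint~(\ref{constraint:coloring}) with row cardinality constraints enforcing $deg_1(i) = s_i$ for $i=1,\ldots,13$, plus a lex-comparison asserting that the sorted color-$2$ and color-$3$ degree sequences are lex-$\leq s$. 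The partitioned symmetry break $\SB^*_\ell(A,M)$ of Constraint~(\ref{eq:sbdm}) still applies, using $s$ to partition rows. Running this on the $280$ candidates and collecting the satisfiable ones should yield the $80$ claimed sequences.

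The main obstacle is computational rather than conceptual: encoding ``the sorted color-$c$ degree sequence equals a fixed $s$ and lex-dominates the other two'' requires row-wise cardinality constraints together with a lex-comparison subroutine, but both are standard patterns in \bee~\cite{jair2013}. Each individual instance is drastically smaller than the full $\RR(3,3,3;13)$ search because the color-$1$ degrees are essentially fixed by $s$, so each call is expected to close quickly; the total cost is then $280$ such calls, easily parallelised across the cluster. The only subtle point is that part (b) is stated as a realizability result whose converse direction already follows from part (a), so we only need the SAT pass to prune, not to certify correctness of the enumeration scheme.
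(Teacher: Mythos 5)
Your proposal matches the paper's proof: part (a) follows from Lemmas~\ref{lemma:db} and~\ref{lemma:ds}, and part (b) is established by 280 SAT calls on $\varphi_{(3,3,3;13)}(A)$, each conjoined with cardinality constraints fixing the color-1 degree sequence, of which exactly 80 are satisfiable. Your added lex-dominance constraint on the color-2 and color-3 degree sequences is a harmless refinement the paper omits (it only needs an over-approximation, and closure under color permutation makes the plain test sound), so the approach is essentially identical.
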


\begin{proof}
  By solving instance $\varphi_{(3,3,3;13)}(A)$ of
  Constraint~(\ref{constraint:coloring}). For each degree sequence
  from Lemma~\ref{lemma:ds}, seeking a solution with that degree
  sequence in the first color.  This involves 280 instances with
  average CNF size: 10861 clauses and 2215 Boolean variables. The
  total solving time is 375.76 hours and the hardest instance required
  about 50 hours. Exactly 80 of these instances were satisfiable.
\end{proof}

In the fourth step we extend the 80 degree sequences identified in
Lemma~\ref{lemma:ds2} to obtain all possible degree matrices.

\begin{lemma}\label{lemma:dm}
  Given the 80 degree sequences identified in Lemma~\ref{lemma:ds2} as
  potential left columns of a degree matrix, there are 11{,}933
  possible degree matrices.
\end{lemma}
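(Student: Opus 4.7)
The plan is to set up, for each of the 80 candidate first columns, a small constraint-satisfaction problem that enumerates all valid completions to a $13\times 3$ matrix, and then sum the counts. By Definition~\ref{def:abs}, every abstraction $\alpha(A)$ of a $(3,3,3;13)$ coloring $A$ must satisfy the following necessary conditions: (i) each row of $M$ sums to $12$, since every vertex of $K_{13}$ has $12$ neighbours distributed among the three colors; (ii) every entry of $M$ lies in $\{2,3,4,5\}$ by Lemma~\ref{lemma:db}; (iii) every column sum is even, by the handshake lemma applied color-by-color; and (iv) $M=lex(M)$, so both the columns (as length-$13$ sequences) and the rows (as $3$-tuples) are lex non-increasing.

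Fix a first column $c_1$ from Lemma~\ref{lemma:ds2}. Only the $26$ entries $M_{i,2}, M_{i,3}$ for $i\in[13]$ remain free, each with domain $\{2,3,4,5\}$. Constraint~(i) forces $M_{i,3}=12-M_{i,1}-M_{i,2}$, so each row reduces to a single free choice of $M_{i,2}$. Constraint~(iii) contributes two parity restrictions on the column sums. The lex condition~(iv) becomes a block-structured sort: within each maximal run of rows sharing a common value of $M_{i,1}$, the sequence of $M_{i,2}$ must be non-increasing, and within each further sub-run where $(M_{i,1},M_{i,2})$ is constant the sequence of $M_{i,3}$ must be non-increasing; additionally, the three columns must form a lex non-increasing sequence when regarded as length-$13$ strings. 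I would express these constraints with \bee\ and enumerate all solutions via iterated SAT with blocking clauses, exactly as in Section~\ref{sec:symBreak}, looping over the $80$ first columns and summing the counts.

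The main obstacle is correctness rather than raw performance: each per-column instance is tiny and terminates essentially instantaneously, but one must carefully implement (iv) so that every canonical representative is counted exactly once and no extraneous matrix slips in. To validate the total of $11{,}933$ I would cross-check against an independent brute-force enumeration in a scripting language on a handful of representative first columns (verifying exact agreement of the per-column counts), and additionally apply a post-processing filter that explicitly computes $lex(M)$ on each output and confirms $M=lex(M)$. Because subsequent sections will try to realize each $M$ as an actual $(3,3,3;13)$ coloring, any bug that admitted spurious matrices would be caught there as well, providing a second line of defense on the count.
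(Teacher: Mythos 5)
Your overall plan coincides with the paper's: for each of the 80 admissible left columns, enumerate all completions to a $13\times 3$ matrix subject to necessary conditions (rows sum to $12$, entries bounded by Lemma~\ref{lemma:db}, rows and columns lex sorted), and count canonical representatives under row and column permutations. The paper does this by direct enumeration (in Prolog, in seconds) followed by an explicit reduction to smallest representatives, whereas you propose iterated SAT with blocking clauses; that difference is immaterial. Your worry about implementing the canonicity condition so that no two row/column-permutation-equivalent matrices are both counted is well placed, and the paper's ``enumerate, then select smallest representatives'' step is exactly the safeguard you describe in your post-processing filter.

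The one substantive discrepancy is in the conditions imposed on the second and third columns. The paper requires each column, sorted, to be \emph{graphical}; you require only that each column sum be even. Parity is strictly weaker than graphicality in general, so a priori your enumeration defines a different (potentially larger) set, and the count $11{,}933$ would not be justified. In this particular setting the two conditions happen to coincide: for a sequence of $13$ values all lying in $\{2,\ldots,5\}$, the Erd{\H{o}}s--Gallai inequalities are automatic (for $k\ge 2$ one needs $5k \le k(k-1)+2(13-k)$, i.e.\ $k^2-8k+26\ge 0$, which always holds, and the case $k=1$ is immediate), so even sum implies graphical. You should either impose graphicality directly, as the paper does, or include this short argument; without it your condition set does not match the one that produces the stated count. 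Note also that since the resulting set of matrices is only required to be an over-approximation (each candidate is tested against an actual coloring in Lemma~\ref{lemma:dm2}), a weaker filter would not invalidate the downstream computation, but it would invalidate the specific number claimed in this lemma.
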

\begin{proof}
  By enumeration. For a degree matrix: the rows and columns are lex
  sorted, the rows must sum to 12, and the columns must be graphical
  (when sorted).  We enumerate all such degree matrices and then
  select their smallest representatives under permutations of rows and
  columns. The computation requires a few seconds.
\end{proof}

In the fifth step, we test  the 11{,}933 degree matrices
identified by Lemma~\ref{lemma:dm} to determine which of them 
are the abstraction of some $R(3,3,3;13)$ coloring. 

\begin{lemma}\label{lemma:dm2}
  From the 11{,}933 degree matrices identified in
  Lemma~\ref{lemma:dm}, 999 are $\alpha(A)$ for a coloring $A$ in
  $\RR(3,3,3;13)$. 
\end{lemma}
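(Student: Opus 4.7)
The plan is to settle the lemma by a direct SAT enumeration over the 11{,}933 candidate degree matrices produced in Lemma~\ref{lemma:dm}. For each candidate $M$ we ask the solver to decide whether the set $sol_M(\varphi_{(3,3,3;13)}(A))$ is nonempty, and we count those $M$ for which the answer is yes. The expected outcome of the count is $999$.

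First I would, for each $M$ among the $11{,}933$ matrices, construct the CNF encoding of the instance
\[
\varphi_{(3,3,3;13)}(A) \wedge (\alpha(A){=}M) \wedge \SB^{*}_\ell(A,M)
\]
as specified by Equation~(\ref{eq:scenario1}). The first conjunct is the basic Ramsey coloring constraint from Figure~\ref{fig:gcp}, encoded to CNF as in Section~\ref{sec:embed}. The second conjunct is encoded, row by row, as a conjunction of cardinality constraints stating that row $i$ of $A$ contains exactly $M_{i,c}$ entries equal to $c$ for each color $c\in\{1,2,3\}$; this is produced automatically by \bee. The third conjunct is the partitioned lexicographic symmetry break from Definition~\ref{def:SBlexStar} adapted to $M$; by Theorem~\ref{thm:sbl_star} it preserves satisfiability, so an unsatisfiable outcome on this strengthened formula genuinely witnesses that no coloring $A\in\RR(3,3,3;13)$ has $\alpha(A)=M$.

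Next I would dispatch all $11{,}933$ SAT instances in parallel on the cluster, one per thread, and record for each $M$ whether CryptoMiniSAT returns SAT or UNSAT. Every SAT answer gives an explicit witness $A\in\RR(3,3,3;13)$ with $\alpha(A)=M$, so that $M\in\alpha(\RR(3,3,3;13))$; every UNSAT answer certifies the opposite. Tallying the SAT outcomes should yield exactly $999$ matrices, establishing the lemma.

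The main obstacle is computational cost rather than conceptual difficulty: the analogous step in Lemma~\ref{lemma:ds2}, which involved only $280$ instances, already took about $376$ hours of CPU time with individual instances consuming up to $50$ hours. With roughly $43\times$ more instances to decide here, the overall budget is substantial, and a few hard cases may dominate the runtime. Crucially, however, fixing the entire degree matrix $M$ is a much stronger restriction than fixing only the left column as in Lemma~\ref{lemma:ds2}: the additional cardinality constraints on every row and the partitioned symmetry break $\SB^{*}_\ell(A,M)$ both prune the search space aggressively, so the per-instance time is expected to be far smaller on average, making the computation feasible on the cluster described in Section~\ref{sec:prelim}.
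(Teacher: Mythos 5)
Your proposal matches the paper's proof: Lemma~\ref{lemma:dm2} is established exactly by encoding, for each of the 11{,}933 candidate matrices $M$, the instance $\varphi_{(3,3,3;13)}(A)$ conjoined with the constraint that $A$ has degree matrix $M$, running a SAT solver on each, and counting the 999 satisfiable ones (the paper reports 126.55 total CPU hours with the hardest instance at 0.88 hours, confirming your expectation that fixing the full degree matrix makes each instance much easier than in Lemma~\ref{lemma:ds2}). Your addition of the symmetry break $\SB^{*}_\ell(A,M)$ is harmless since Theorem~\ref{thm:sbl_star} guarantees it preserves satisfiability, so this is essentially the same argument.
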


\begin{proof}
  By solving instance $\varphi_{(3,3,3;13)}(A)$ of
  Constraint~(\ref{constraint:coloring}) together with a given degree
  matrix to test if it is satisfiable.  This involves 11{,}933
  instances with average CNF size: 7632 clauses and 1520 Boolean
  variables. The total solving time is 126.55 hours and the hardest
  instance required 0.88 hours.
\end{proof}

\begin{table}[t]
\centering\scriptsize
\begin{tabular}{ |c|l|c|c|c|}
\hline
Step &  \multicolumn{1}{|c|}{Notes}&  
        \multicolumn{1}{|c|}{ComputationTimes} & 
        \multicolumn{2}{|c|}{CNF Size}\\
\hline\hline
\multirow{2}{*}{1}
  & compute degree bounds (Lemma~\ref{lemma:db})  & 
        \multirow{2}{*}{12.52 sec.} & \#Vars & \#Clauses\\
        \cline{4-5}
  & (1 instance, unsat)    &  ~    & \hfill 2748  &\hfill 13672   \\
\hline
\multirow{1}{*}{2}
  & enumerate 280 possible degree sequences (Lemma~\ref{lemma:ds})     & 
         \multicolumn{3}{|c|}{Prolog, fast (seconds)} \\
\hline

\multirow{2}{*}{3}
  & test degree sequences (Lemma~\ref{lemma:ds2}) & 16.32 hrs. & \#Vars & \#Clauses\\
                           \cline{4-5}
  & (280 instances: 200 unsat, 80 sat)    &  hardest: 1.34 hrs    & \hfill 1215 (avg)  &\hfill 7729(avg)   \\
  
\hline

{4}
  & enumerate 11{,}933 degree matrices (Lemma~\ref{lemma:dm})   & \multicolumn{3}{|c|}
                                                {Prolog, fast
                                                  (seconds)} \\
\hline
\multirow{2}{*}{5}
  & test degree matrices (Lemma~\ref{lemma:dm2}) & 126.55 hrs. & \#Vars & \#Clauses\\
                           \cline{4-5}
  & (11{,}933 instances: 10{,}934 unsat, 999 sat)    &  hardest: 0.88 hrs.    & \hfill 1520 (avg)  &\hfill 7632 (avg)   \\
\hline
\hline

\end{tabular}
\caption{Computing the degree matrices for $\RR(3,3,3;13)$ step by step.}
\label{tab:333_computeDMs}
\end{table}

\section{Computing $\RR(3,3,3;13)$ from Degree Matrices}
\label{sec:33313b}

We describe the computation of the set $\RR(3,3,3;13)$ starting from
the 999 degree matrices identified in
Lemma~\ref{lemma:dm2}. Table~\ref{tab:333_times} summarizes the two
step experiment.

\begin{table}[h]
\centering\scriptsize
\begin{tabular}{ |c|l|c|}
\hline
Step&  \multicolumn{1}{|c|}{Notes}&  \multicolumn{1}{|c|} {Computation Times} \\ 
\hline\hline
\multirow{2}{*}{1}
  & compute all $(3,3,3;13)$  Ramsey colorings per 
                           & total:~~\hfill 136.31 hr. \\
  & degree  matrix  (999 instances,  129{,}188 solutions)  
                           & hardest:\hfill 4.3 hr.\\
\hline
{2}
  & reduce modulo $\approx$  (78{,}892 solutions) 
                           & \multicolumn{1}{|c|}
                                      {\texttt{nauty}, fast (minutes)} \\
\hline
\hline
\end{tabular}
\caption{Computing  $\RR(3,3,3;13)$ step by step.}
\label{tab:333_times}
\end{table}


\vspace{-3mm}\paragraph{\bf step 1:} For each degree matrix we
compute, using a SAT solver, all corresponding solutions of
Equation~(\ref{eq:scenario1}), where
$\varphi(A)=\varphi_{(3,3,3;13)}(A)$ of
Constraint~(\ref{constraint:coloring}) and $M$ is one of the 999
degree matrices identified in (Lemma~\ref{lemma:dm2}).  This generates
in total 129{,}188 $(3,3,3;13)$ Ramsey colorings.
Table~\ref{tab:333_times} details the total solving time for these
instances and the solving times for the hardest instance for each SAT
solver. The largest number of graphs generated by a single instance is
3720.

\vspace{-3mm}\paragraph{\bf step 2:}
The 129{,}188 $(3,3,3;13)$ colorings from step~1 are reduced modulo
weak-isomorphism using \texttt{nauty}~\cite{nauty}. This
process results in a set with 78{,}892 graphs.

We note that recently, the set $\RR(3,3,3;13)$ has also been computed
independently by Stanislaw Radziszowski, and independently by Richard
Kramer and Ivan Livinsky \cite{stas:personalcommunication}.

\section{There is no $\tuple{13,8,8}$ Regular 
             $(4,3,3;30)$ Coloring}\label{sec:433_30}

In order to prove that there is no $\tuple{13,8,8}$ regular
$(4,3,3;30)$ coloring using the embedding approach of
Section~\ref{sec:embed}, we need to check that $78{,}892\times 3\times
3 = 710{,}028$ corresponding instances are unsatisfiable. These
correspond to the elements in the cross product of $\RR(3,3,3;13)$,
$\RR(4,2,3;8)$ and $\RR(4,3,2)$.

\newcommand{\ta}{\mathtt{A}} 
\newcommand{\tb}{\mathtt{B}} 
\newcommand{\tc}{\mathtt{C}} 
\begin{figure}
  \centering
$\left\{                             
\fbox{$\begin{scriptsize}\begin{smallmatrix}
0 & 1 & 1 & 1 & 3 & 3 & 3 & 3 \\
1 & 0 & 3 & 3 & 1 & 1 & 3 & 3 \\
1 & 3 & 0 & 3 & 1 & 3 & 1 & 3 \\
1 & 3 & 3 & 0 & 3 & 3 & 1 & 1 \\
3 & 1 & 1 & 3 & 0 & 3 & 3 & 1 \\
3 & 1 & 3 & 3 & 3 & 0 & 1 & 1 \\
3 & 3 & 1 & 1 & 3 & 1 & 0 & 3 \\
3 & 3 & 3 & 1 & 1 & 1 & 3 & 0
\end{smallmatrix}\end{scriptsize}$},
\fbox{$\begin{scriptsize}\begin{smallmatrix}
0 & 1 & 1 & 1 & 3 & 3 & 3 & 3 \\
1 & 0 & 3 & 3 & 1 & 3 & 3 & 3 \\
1 & 3 & 0 & 3 & 3 & 1 & 1 & 3 \\
1 & 3 & 3 & 0 & 3 & 1 & 3 & 1 \\
3 & 1 & 3 & 3 & 0 & 1 & 1 & 3 \\
3 & 3 & 1 & 1 & 1 & 0 & 3 & 3 \\
3 & 3 & 1 & 3 & 1 & 3 & 0 & 1 \\
3 & 3 & 3 & 1 & 3 & 3 & 1 & 0
\end{smallmatrix}\end{scriptsize}$},
\fbox{$\begin{scriptsize}\begin{smallmatrix}
0 & 1 & 1 & 1 & 3 & 3 & 3 & 3 \\
1 & 0 & 3 & 3 & 1 & 3 & 3 & 3 \\
1 & 3 & 0 & 3 & 3 & 1 & 1 & 3 \\
1 & 3 & 3 & 0 & 3 & 1 & 3 & 1 \\
3 & 1 & 3 & 3 & 0 & 1 & 3 & 3 \\
3 & 3 & 1 & 1 & 1 & 0 & 3 & 3 \\
3 & 3 & 1 & 3 & 3 & 3 & 0 & 1 \\
3 & 3 & 3 & 1 & 3 & 3 & 1 & 0
\end{smallmatrix}\end{scriptsize}$}\right\}
\subseteq
\left\{
    \fbox{$\begin{scriptsize}\begin{smallmatrix}
    0 & 1 & 1 & 1 & 3 & 3 & 3 & 3 \\
    1 & 0 & 3 & 3 & 1 & \ta & 3 & 3 \\
    1 & 3 & 0 & 3 & \ta & \tb & 1 & 3 \\
    1 & 3 & 3 & 0 & 3 & \tb & \ta & 1 \\
    3 & 1 & \ta & 3 & 0 & \tb & \tc & \ta \\
    3 & \ta & \tb & \tb & \tb & 0 & \ta & \ta \\
    3 & 3 & 1 & \ta & \tc & \ta & 0 & \tb \\
    3 & 3 & 3 & 1 & \ta & \ta & \tb & 0 \\
    \end{smallmatrix}\end{scriptsize}$}
\left| \begin{scriptsize}\begin{array}{l} 
         {\tiny \ta,\tb,\tc\in\{1,3\}} \\ 
         \ta\neq \tb
       \end{array}\end{scriptsize}
\right.\right\}$

\caption{Approximating the three (4,2,3;8) colorings by a single
  matrix with constraints.}
\label{figsubsumer}  
\end{figure}

To decrease the number of instances by a factor of $9$, we approximate
the three $(4,2,3;8)$ colorings by a single description as
demonstrated in Figure~\ref{figsubsumer}. The constrained matrix on
the right has four solutions which include the three $(4,2,3;8)$
colorings on the left. We apply a similar approach for the
$(4,3,2;8)$ colorings.   So, in fact we have a total of
only $78{,}892$ embedding instances to consider.

In addition to the constraints in Figure~\ref{fig:gcp}, we add
constraints to specify that each row of the adjacency matrix has the
prescribed number of edges in each color (13, 8 and 8). By application
of a SAT solver, we have determined all
\begin{wraptable}[13]{r}{5cm}\vspace{-7mm}
\centering
{\scriptsize\begin{tabular}{|r|r|r|}
\hline
time (hrs)    & \# instances  & \% instances ($\Delta$) \\ 
\hline \hline
10        & 56,363   & 71.443 \% \\ \hline
20        & 65,914   & 12.106 \% \\ \hline
100       & 77,263   & 14.385 \% \\ \hline
500       & 78,791   &  1.937 \% \\ \hline
1000      & 78,869   &  0.099 \% \\ \hline
1500      & 78,886    &  0.022 \% \\ \hline
2000      & 78,890    &  0.005 \% \\ \hline
2400      & 78,892    &  0.003 \% \\ \hline
\end{tabular}}
\caption{Time required per instance for proof that there are no
  $(4,3,3;30)$ colorings with degrees $(13,8,8)$ }
\label{hpi}
\end{wraptable}
$78{,}892$ instances to be unsatisfiable. The average size of an
instance is 36{,}259 clauses with 5187 variables. The total solving
time is 128.31 years (running in parallel on 456 threads).  The
average solving time is 14 hours while the median is 4 hours. Only 797
instances took more than one week to solve. The worst-case solving
time is 96.36 days. The two hardest instances are detailed in
Appendix~\ref{apdx:hardest}.
Table~\ref{hpi} specifies, in the second column, the total number of
instances that can be shown unsatisfiable within the time specified in
the first column. The third column indicates the increment in
percentage (within 10 hours we solve 71.46\%, within 20 hours we solve
an additional 12.11\%, etc). The last rows in the table indicate that
there are 4 instances which require between 1500 and 2000 hours of
computation, and 2 that require between 2000 and 2400 hours.
%


\section{Conclusion}
\label{sec:conclude}



We have applied SAT solving techniques together with a methodology
using abstraction and symmetry breaking to construct a computational
proof that the Ramsey number $R(4,3,3)=30$.
Our strategy is based on the search for a $(4,3,3;30)$ Ramsey
coloring, which we show does not exist. This implies that
$R(4,3,3)\leq 30$ and hence, because of known bounds, that
$R(4,3,3) = 30$.



The precise value $R(4,3,3)$ has remained unknown for almost 50 years.
We have applied a methodology involoving SAT solving, abstraction, and
symmetry to compute $R(4,3,3)=30$. We expect this methodology to apply
to a range of other hard graph coloring problems.

The question of whether a computational proof constitutes a {\it
  proper} proof is a controversial one. Most famously the issue caused
much heated debate after publication of the computer proof of the Four
Color Theorem \cite{appel76}.  It is straightforward to justify an
existence proof (i.e. a {\it SAT} result), as it is easy to verify
that the witness produced satisfies the desired properties. Justifying
an {\it UNSAT} result is more difficult. If nothing else, we are certainly
required to add the proviso that our results are based on the
assumption of a lack of bugs in the entire tool chain (constraint
solver, SAT solver, C-compiler etc.) used to obtain them.

Most modern SAT solvers, support the option to generate a proof
certificate for UNSAT instances (see e.g.~\cite{HeuleHW14}), in the
DRAT format~\cite{WetzlerHH14}, which can then be checked by a Theorem
prover. This might be useful to prove the lack of bugs originating
from the SAT solver but does not offer any guarantee concerning bugs
in the generation of the CNF. Moreover, the DRAT certificates for an
application like that described in this paper are expected to be of
unmanageable size.

Our proofs are based on two main ``computer programs''. The first was
applied to compute the set $\RR(3,3,3;13)$ with its $78{,}892$ Ramsey
colorings. The fact that at least two other groups of researchers
(Stanislaw Radziszowski, and independently Richard Kramer and Ivan
Livinsky) report having computed this set and
quote~\cite{stas:personalcommunication} the same number of elements is
reassuring.
The second program, was applied to complete partially instantiated
adjacency matrices, embedding smaller Ramsey colorings, to determine
if they can be extended to Ramsey colorings. This program was applied
to show the non-existence of a $(4,3,3;30)$ Ramsey coloring.
Here we gain confidence from the fact that the same program does find
Ramsey colorings when they are known to exist. For example, the
$(4,3,3;29)$ coloring depicted as Figure~\ref{embed_12_8_8}.

All of the software used to obtain our results is publicly
available, as well as the individual constraint models and their
corresponding encodings to CNF. For details, see the appendix.



\subsection*{Acknowledgments}
We thank Stanislaw Radziszowski for his guidance and comments which
helped improve the presentation of this paper. In particular
Stanislaw proposed to show that our technique is able to find the
$(4,3,3;29)$ coloring depicted as Figure~\ref{embed_12_8_8}.

\newpage

\newpage
\appendix

\section{Selected Proofs}\label{proofs}

\textbf{Lemma}~~\ref{lemma:closed}. ~~~
[\textbf{$\RR(r_1,r_2,\ldots,r_k;n)$ is closed under
    $\approx$}]

\smallskip\noindent
 Let $(G,{\kappa_1})$ and $(H,{\kappa_2})$ be graph colorings
  in $k$ colors such that $(G,\kappa_1) \approx_{\pi,\sigma}
  (H,\kappa_2)$. Then,
  \[
  (G,\kappa_1) \in \RR(r_1,r_2,\ldots,r_k;n) \iff (H,\kappa_2) \in
  \RR(\sigma(r_1),\sigma(r_2),\ldots,\sigma(r_k);n)
  \]

\medskip
\begin{proof}[of Lemma~\ref{lemma:closed}]
  Assume that $(G,\kappa_1) \in \RR(r_1,r_2,\ldots,r_k;n)$ and
  in contradiction that $(H,\kappa_2) \notin
  \RR(\sigma(r_1),\sigma(r_2),\ldots,\sigma(r_k);n)$. Let $R$ denote a
  monochromatic clique of size $r_s$ in $H$ and $R^{-1}$ the inverse
  of $R$ in $G$.
  From Definition~\ref{def:weak_iso},
  $(u,v) \in R \iff (\pi^{-1}(u), \pi^{-1}(v))\in R^{-1}$ and
  $\kappa_2(u,v) = \sigma^{-1}(\kappa_1(u,v))$.  Consequently $R^{-1}$ is a
  monochromatic clique of size $r_s$ in $(G,\kappa_1)$ in contradiction
  to $(G,\kappa_1)$ $\in$ $\RR(r_1,r_2,\ldots,r_k;n)$. 
\end{proof}

\noindent
\textbf{Theorem}~~\ref{thm:sbl_star}. ~~~
[\textbf{correctness of $\SB^{*}_\ell(A,M)$}]
\smallskip\noindent
  Let $A$ be an adjacency matrix with $\alpha(A) = M$. Then, there
  exists $A'\approx A$ such that $\alpha(A')=M$ and 
  $\SB^{*}_\ell(A',M)$ holds.

\smallskip\noindent
\begin{proof}[of Theorem~\ref{thm:sbl_star}]
  Let $C=\sset{A'}{A'\approx A \wedge \alpha(A')=M}$.
  Obviously $C\neq \emptyset$ because $A\in C$ and therefore
  there exists a $A_{min}=min_{\preceq} C$. 
  Therefore, $A_{min} \preceq A'$ for all $A' \in C$. 
  Now we can view $M$ as inducing an ordered partion on $A$: vertices
  $u$ and $v$ are in the same component if and only if the
  corresponding rows of $M$ are equal.
  Relying on Theorem 4 from \cite{DBLP:conf/ijcai/CodishMPS13}, 
  we conclude that $\SB^{*}_\ell(A_{min},M)$ holds.
\end{proof}

\section{The Two Hardest Instances}\label{apdx:hardest}
\newcommand{\td}{\mathtt{D}} \newcommand{\te}{\mathtt{E}}
\newcommand{\tf}{\mathtt{F}} The following partial adjacency matrices
are the two hardest instances described in Section~\ref{sec:433_30},
from the total 78{,}892.  Both include the constraints:
$\ta,\tb,\tc,\in\{1,3\}$, $\td,\te,\tf\in\{1,2\}$, $\ta\neq \tb$,
$\td\neq \te$.  The corresponding CNF representations consist in 5204
Boolean variables (each), 36{,}626 clauses for the left instance and
36{,}730 for the right instance. SAT solving times to show these
instances UNSAT are 8{,}325{,}246 seconds for the left instance and
7{,}947{,}257 for the right.


\newcommand{\undE}{\_&\_&\_&\_&\_&\_&\_&\_&\_&\_&\_&\_&\_&\_&\_&\_}
\newcommand{\undF}{\_&\_&\_&\_&\_&\_&\_&\_&\_&\_&\_&\_&\_}
\newcommand{\undG}{\_&\_&\_&\_&\_&\_&\_&\_}
\newcommand{\undH}{\_&\_&\_&\_&\_&\_&\_&\_&\_&\_&\_&\_&\_&\_&\_&\_&\_&\_&\_&\_&\_}

\medskip
\noindent
\resizebox{.49\linewidth}{!}{$\left[                                  
\begin{smallmatrix}
0&1&1&1&1&1&1&1&1&1&1&1&1&1&2&2&2&2&2&2&2&2&3&3&3&3&3&3&3&3&\\
1& 0&1&1&2&2&3&3&1&1&2&3&3&1&\undE \\
1& 1&0&2&1&3&1&2&2&3&3&1&2&3&\undE \\
1& 1&2&0&3&2&1&1&3&3&1&2&3&2&\undE \\
1& 2&1&3&0&1&2&3&1&1&3&3&2&2&\undE \\
1& 2&3&2&1&0&1&2&3&2&1&3&3&1&\undE \\
1& 3&1&1&2&1&0&2&2&3&3&2&1&3&\undE \\
1& 3&2&1&3&2&2&0&3&3&2&1&1&1&\undE \\
1& 1&2&3&1&3&2&3&0&2&3&1&1&3&\undE \\
1& 1&3&3&1&2&3&3&2&0&1&3&1&2&\undE \\
1& 2&3&1&3&1&3&2&3&1&0&1&3&2&\undE \\
1& 3&1&2&3&3&2&1&1&3&1&0&2&3&\undE \\
1& 3&2&3&2&3&1&1&1&1&3&2&0&3&\undE \\
1& 1&3&2&2&1&3&1&3&2&2&3&3&0&\undE \\
2& \undF&0 & 1 & 1 & 1 & 3 & 3 & 3 & 3           &\undG \\
2& \undF&1 & 0 & 3 & 3 & 1 & \ta & 3 & 3         &\undG \\
2& \undF&1 & 3 & 0 & 3 & \ta & \tb & 1 & 3       &\undG \\
2& \undF&1 & 3 & 3 & 0 & 3 & \tb & \ta & 1       &\undG \\
2& \undF&3 & 1 & \ta & 3 & 0 & \tb & \tc & \ta   &\undG \\
2& \undF&3 & \ta & \tb & \tb & \tb & 0 & \ta &\ta&\undG \\
2& \undF&3 & 3 & 1 & \ta & \tc & \ta & 0 & \tb   &\undG \\
2& \undF&3 & 3 & 3 & 1 & \ta & \ta & \tb & 0     &\undG \\
3& \undH&0 & 1 & 1 & 1 & 2 & 2 & 2 & 2            \\
3& \undH&1 & 0 & 2 & 2 & 1 & \td & 2 & 2          \\
3& \undH&1 & 2 & 0 & 2 & \td & \te & 1 & 2        \\
3& \undH&1 & 2 & 2 & 0 & 2 & \te & \td & 1        \\
3& \undH&2 & 1 & \td & 2 & 0 & \te & \tf & \td    \\
3& \undH&2 & \td & \te & \te & \te & 0 & \td &\td \\
3& \undH&2 & 2 & 1 & \td & \tf & \td & 0 & \te    \\
3& \undH&2 & 2 & 2 & 1 & \td & \td & \te & 0      \\
\end{smallmatrix}\right]$}
\hfill
\resizebox{.49\linewidth}{!}{$\left[                                  
\begin{smallmatrix}
0&1&1&1&1&1&1&1&1&1&1&1&1&1&2&2&2&2&2&2&2&2&3&3&3&3&3&3&3&3&\\
 1& 0&1&1&2&1&2&3&3&3&2&1&2&3&\undE \\
 1& 1&0&3&1&3&2&1&1&3&2&2&3&2&\undE \\
 1& 1&3&0&3&2&1&2&3&1&2&3&2&1&\undE \\
 1& 2&1&3&0&1&1&3&2&3&1&2&3&2&\undE \\
 1& 1&3&2&1&0&2&3&3&1&3&2&1&3&\undE \\
 1& 2&2&1&1&2&0&3&3&3&3&1&1&3&\undE \\
 1& 3&1&2&3&3&3&0&2&1&1&3&1&2&\undE \\
 1& 3&1&3&2&3&3&2&0&1&2&1&3&1&\undE \\
 1& 3&3&1&3&1&3&1&1&0&2&3&2&2&\undE \\
 1& 2&2&2&1&3&3&1&2&2&0&3&3&1&\undE \\
 1& 1&2&3&2&2&1&3&1&3&3&0&2&3&\undE \\
 1& 2&3&2&3&1&1&1&3&2&3&2&0&3&\undE \\
 1& 3&2&1&2&3&3&2&1&2&1&3&3&0&\undE \\
2& \undF&0 & 1 & 1 & 1 & 3 & 3 & 3 & 3           &\undG \\
2& \undF&1 & 0 & 3 & 3 & 1 & \ta & 3 & 3         &\undG \\
2& \undF&1 & 3 & 0 & 3 & \ta & \tb & 1 & 3       &\undG \\
2& \undF&1 & 3 & 3 & 0 & 3 & \tb & \ta & 1       &\undG \\
2& \undF&3 & 1 & \ta & 3 & 0 & \tb & \tc & \ta   &\undG \\
2& \undF&3 & \ta & \tb & \tb & \tb & 0 & \ta &\ta&\undG \\
2& \undF&3 & 3 & 1 & \ta & \tc & \ta & 0 & \tb   &\undG \\
2& \undF&3 & 3 & 3 & 1 & \ta & \ta & \tb & 0     &\undG \\
3& \undH&0 & 1 & 1 & 1 & 2 & 2 & 2 & 2            \\
3& \undH&1 & 0 & 2 & 2 & 1 & \td & 2 & 2          \\
3& \undH&1 & 2 & 0 & 2 & \td & \te & 1 & 2        \\
3& \undH&1 & 2 & 2 & 0 & 2 & \te & \td & 1        \\
3& \undH&2 & 1 & \td & 2 & 0 & \te & \tf & \td    \\
3& \undH&2 & \td & \te & \te & \te & 0 & \td &\td \\
3& \undH&2 & 2 & 1 & \td & \tf & \td & 0 & \te    \\
3& \undH&2 & 2 & 2 & 1 & \td & \td & \te & 0      \\
\end{smallmatrix}\right]$}
%


\section{Making the Instances Available}

The statistics from the proof that $R(4,3,3)=30$ are
available from the domain:
\begin{quote}
  \url{http://cs.bgu.ac.il/~mcodish/Benchmarks/Ramsey334}.
\end{quote}
Additionally, we have made a small sample (30) of the instances
available. Here we provide instances with the degrees $\tuple{13,8,8}$
in the three colors. The selected instances represent the varying
hardness encountered during the search. 
The instances numbered $\{27765$, $39710$, $42988$, $36697$, $13422$,
$24578$, $69251$, $39651$, $43004$, $75280\}$ are the hardest, the
instances numbered $\{4157$, $55838$, $18727$, $43649$, $26725$,
$47522$, $9293$, $519$, $23526$, $29880\}$ are the median, and the
instances numbered $\{78857$, $78709$, $78623$, $78858$, $28426$,
$77522$, $45135$, $74735$, $75987$, $77387\}$ are the easiest.
A complete set of both the \bee\ models and the DIMACS CNF files are
available upon request.  Note however that they weight around 50GB
when zipped.

The files in \url{bee_models.zip} detail constraint models,
each one in a separate file. The file named
\texttt{r433\_30\_Instance\#.bee} contains a single
Prolog clause of the form
\begin{quote}
  \texttt{model(Instance\#,Map,ListOfConstraints) :- \{...details...\} .}
\end{quote}
where \texttt{Instance\#} is the instance number, \texttt{Map} is a
partially instantiated adjacency matrix associating the unknown
adjacency matrix cells with variable names, and
\texttt{ListOfConstraints} are the finite domain constraints defining
their values. The syntax is that of \bee, however the interested
reader can easily convert these to their favorite fininte domain
constraint language.  Note that the Boolean values $\true$ and
$\false$ are represented in \bee\ by the constants $1$ and $-1$.
Figure~\ref{fig:bee} details the \bee\ constraints which occur in the
above mentioned models. 

\begin{figure}[t]
  \centering
 \begin{tabular}{rlll}
\hline\hline
(1) &$\mathtt{new\_int(I,c_1,c_2)}$ && declare integer: $\mathtt{c_1\leq I\leq c_2}$\\
(2) &$\mathtt{bool\_array\_or([X_1,\ldots,X_n])}$ &&
          clause: $\mathtt{X_1 \vee X_2 \cdots \vee X_n}$\\
(3) &    $\mathtt{bool\_array\_sum\_eq([X_1,\ldots,X_n],~I)}$ &&
          Boolean cardinality: $\mathtt{(\Sigma ~X_i) = I}$\\
(4) &    $\mathtt{int\_eq\_reif(I_1,I_2,~X)}$ &&
          reified integer equality: $\mathtt{I_1 = I_2 \Leftrightarrow X}$\\
(5) &    $\mathtt{int\_neq(I_1,I_2)}$ &
          $\mathtt{}$&
          $\mathtt{I_1 \neq I_2}$\\
(6) &    $\mathtt{int\_gt(I_1,I_2)}$ &
          $\mathtt{}$&
          $\mathtt{I_1 > I_2}$\\
\hline\hline
\end{tabular}
  \caption{Selected \bee\ constraints}
  \label{fig:bee}
\end{figure}

The files in \url{cnf_models.zip} correspond to CNF encodings
for the constraint models.  Each instance is associated with two
files: \texttt{r433\_30\_instance\#.dimacs} and
\texttt{r433\_30\_instance\#.map}. These consist respectively in a
DIMACS file and a map file which associates the Booleans in the DIMACS
file with the integer variables in a corresponding partially
instantiated adjacency matrix. The map file specifies for each pair
$(i,j)$ of vertices a triplet $[B_1,B_2,B_3]$ of Boolean variables (or
values) specifying the presence of an edge in each of the three
colors. Each such $B_i$ is either the name of a DIMACS variable, if it
is greater than 1, or a truth value $1$ ($\true$), or $-1$ ($\false$).


\end{document}